\documentclass[11pt]{article}

\usepackage[final]{acl}

\usepackage{times}
\usepackage{latexsym}
\usepackage{amsmath, amssymb, amsthm}
\usepackage{mathtools}
\usepackage{bm}
\usepackage{enumitem}
\usepackage{hyperref}
\usepackage{todonotes}
\usepackage{tikz}
\usepackage{subcaption}

\usepackage[T1]{fontenc}
\usepackage[utf8]{inputenc}

\usepackage{microtype}

\usepackage{inconsolata}

\usepackage{graphicx}
\usepackage{float}
\usepackage{booktabs}

\usepackage[framemethod=TikZ]{mdframed}
\definecolor{promptbg}{rgb}{0.945,0.957,0.980}
\definecolor{promptframe}{rgb}{0.70,0.74,0.82}
\surroundwithmdframed[
  backgroundcolor=promptbg,
  linecolor=promptframe,
  linewidth=0.5pt,
  roundcorner=3pt,
  innertopmargin=5pt,
  innerbottommargin=5pt,
  innerleftmargin=5pt,
  innerrightmargin=5pt,
  skipabove=5pt,
  skipbelow=3pt,
]{verbatim}
\usepackage{etoolbox}
\AtBeginEnvironment{verbatim}{\scriptsize}

\newtheorem{theorem}{Theorem}
\newtheorem{proposition}[theorem]{Proposition}
\newtheorem{lemma}[theorem]{Lemma}
\newtheorem{corollary}[theorem]{Corollary}

\theoremstyle{definition}

\newtheorem{assumption}{Assumption}
\newtheorem{remark}{Remark}

\DeclareMathOperator*{\E}{\mathbb{E}}
\DeclareMathOperator{\Prob}{\mathbb{P}}
\newcommand{\dtv}{d_{\mathrm{TV}}}
\newcommand{\dks}{d_{\mathrm{KS}}}
\newcommand{\R}{\mathbb{R}}
\newcommand{\X}{\mathcal{X}}
\newcommand{\Y}{\mathcal{Y}}
\newcommand{\Z}{\mathcal{Z}}
\newcommand{\U}{\mathcal{U}}
\newcommand{\Wone}{W_1}
\newcommand{\PredSet}{\Gamma}

\newcommand{\ind}{\mathbf{1}}

\newcommand{\Dcal}{\mathcal{D}_{\mathrm{cal}}}

\title{Conformalized Large Language Models under Configuration Shift}

\author{%
	Yuqicheng Zhu\textsuperscript{1,2,7}\thanks{This work was conducted during a research visit to the University of Oxford.}, 
	Jialin Yu\textsuperscript{3},
	Lin Li\textsuperscript{3},
	Gengyuan Zhang\textsuperscript{4}\footnotemark[1],
	Zhen Yang\textsuperscript{5},\\
	\textbf{Steffen Staab\textsuperscript{1,6},} 
	\textbf{Puneet Dokania\textsuperscript{2,3},}
	\textbf{Philip Torr\textsuperscript{3},}
	\textbf{Jie Tang\textsuperscript{5},}
	\textbf{Evgeny Kharlamov\textsuperscript{2,7}} 
	\\
	\textsuperscript{1}University of Stuttgart, 
	\textsuperscript{2}Robert Bosch GmbH,
	\textsuperscript{3}University of Oxford,
	\textsuperscript{4}LMU Munich,\\
	\textsuperscript{5}Tsinghua University,
	\textsuperscript{6}University of Southampton,
	\textsuperscript{7}University of Oslo
	\\
	\texttt{yuqicheng.zhu@de.bosch.com}
}

\begin{document}
\maketitle

\begin{abstract}
Conformal prediction (CP) is a distribution-free framework for uncertainty quantification that has recently been adapted to large language models (LLMs), providing prediction sets with finite-sample coverage guarantees under exchangeability.
Yet for LLMs, nonconformity scores are often induced by an inference pipeline, not just a fixed model, making them depend not only on the data distribution but also on configurable factors such as the prompt template, decoding parameters, and deployment setting.
Since such configurations are routinely modified in practice but rarely treated as a source of shift, their impact on CP validity remains poorly understood.
We call this \emph{configuration shift} and study it
systematically along three axes: prompt template, decoding
temperature, and weight quantization. 
In a broad empirical study spanning $9$ LLMs, $4$ datasets, and $4$ nonconformity scores, we find that configuration shift consistently erodes CP validity, often driving empirical coverage below the target. By contrast, efficiency is largely preserved: valid prediction sets remain close in size to the i.i.d. baseline.
We derive coverage lower bounds that attribute this loss to a discrepancy between calibration and test score distributions, and use their finite-sample plug-in versions as empirical diagnostics of shift severity.
We further show that these findings lead to practical mitigations: bound-inspired recalibration is effective with limited test examples, while fragility-aware calibration ensembling recovers much of the lost coverage without test data.
\end{abstract}

\section{Introduction}
As large language models (LLMs) are increasingly deployed in high-stakes domains, including medicine, law, education, and scientific decision support, reliable uncertainty quantification has become critical \citep{shorinwa2024uqsurvey}.
Conformal prediction (CP) offers a distribution-free framework for quantifying uncertainty with statistical guarantees \citep{vovk2005,angelopoulos2021gentle}.
Given a held-out calibration set and a model-specific uncertainty score (nonconformity score), CP constructs a prediction set that, under exchangeability, contains the true answer with user-specified probability \citep{papadopoulos2002,lei2018}.
The size of the resulting set reflects the model's effective uncertainty at the chosen confidence level.

The statistical guarantees of CP rely on exchangeability of the nonconformity scores.
In classical supervised learning, the predictor and scoring rule are fixed after training; for example, a score may be one minus the softmax probability assigned to the true label \citep{sadinle2019}. Under this fixed score map, exchangeable examples induce exchangeable scores.
For LLM systems, however, nonconformity scores are often produced by an inference pipeline \citep{kumar2023mcq,su2024api,quach2024clm}. The pipeline may include a prompt template, decoding configuration, and model deployment. Changing any of these components can change the score assigned to the same labelled example \citep{sclar2024quantifying, shi-etal-2024-thorough, helcig2026statistically}. We call this \emph{configuration shift}. 
Because CP validity depends on the resulting scores, such shifts can break score exchangeability and invalidate the guarantees.

Most work on CP under distribution shift models changes in the example distribution, such as covariate, domain, subject, or temporal shift \citep{tibshirani2019covariate,gibbs2021adaptive,barber2023,cauchois2024robust}. Configuration shift is orthogonal: the example distribution is unchanged, while the map from examples to scores changes.
Some studies include prompt-sensitivity or robustness checks \citep{kumar2023mcq,su2024api}, but these are usually auxiliary empirical tests over a small number of variants. They do not systematically study pipeline changes as a source of score-level distribution shift.

Yet such pipeline changes are routine in deployed LLM systems.
Prompt templates, decoding parameters, and quantization levels vary across users, tasks, and hardware constraints.
Thus, configuration shift is both underexplored by existing literature and difficult to avoid in practice. We therefore study it as a first-class threat to conformal validity in LLM pipelines, with three contributions.

\textbf{First}, 
we conduct a systematic empirical study of configuration shift in conformal LLM pipelines.
We sweep three deployment-relevant axes (prompt template,
decoding temperature, and weight quantization) across $9$ LLMs, $4$ datasets, and $4$ nonconformity scores. We find that the shift
substantially erodes CP validity: empirical coverage drops well below
target and gathering more calibration data deepens the bias. 
The damage falls mostly on validity rather than efficiency, i.e., among the runs that remain valid, set sizes stay close to their i.i.d. values.
\textbf{Second}, 
we explain this failure through coverage lower bounds that quantify how discrepancies between the calibration and test score distributions reduce guaranteed coverage. We further derive label-free variants that depend only on the model's predictions, yielding plug-in diagnostics that can be computed without test labels.
\textbf{Third}, 
we show that the empirical and theoretical results are
directly actionable. The discrepancy term in lower bound motivates threshold corrections, while the empirical fragility profile motivates calibration ensembling schemes that deliberately over-weight configurations prone to undercoverage. 

\section{Preliminaries}
\label{sec:prelim}

\subsection{Conformal Prediction}
\label{sec:cp-background}
CP provides a principled framework for uncertainty quantification by constructing a \emph{prediction set}, namely a set of candidate solutions for a given task, with a guaranteed probability of containing the ground-truth solution at a predefined confidence level \cite{vovk2005}.
It assigns a (nonconformity) score to each candidate solution and uses a calibrated threshold to include only those candidates required to satisfy the desired confidence level.
The size of the resulting prediction set can therefore be interpreted as a measure of uncertainty: larger sets indicate greater uncertainty in the prediction.

\paragraph{Nonconformity Score.} 
Let $\X$ denote the input space,
$\Y$ the label space, and $Z=(X,Y)$ take values in $\Z:=\X\times\Y$.
A nonconformity score is a measurable map
\[
s:\X\times\Y\to\R,
\]
which measures how poorly a candidate label $y$ fits input $x$
according to the underlying model.

\paragraph{Calibration.}
Given held-out calibration data $\Dcal=\{(X_i,Y_i)\}_{i=1}^n$,
compute the calibration scores $S_i=s(X_i,Y_i)$ for each example.
For a target miscoverage level $\alpha\in(0,1)$, set
$k_\alpha:=\lceil(n+1)(1-\alpha)\rceil$ and take the threshold
\begin{equation}\label{eq:lambda-hat}
	\hat\lambda := S_{(k_\alpha)},
\end{equation}
the $k_\alpha$-th smallest value among $S_1,\ldots,S_n$; if
$k_\alpha=n+1$ (which occurs when $\alpha<1/(n+1)$) we adopt the
convention $\hat\lambda:=+\infty$.

\paragraph{Set Construction.}
For a new input $x$, the prediction set collects every label whose
score does not exceed the calibrated threshold:
\begin{equation}\label{eq:pred-set}
	\PredSet_{\hat\lambda}(x)
	:=\{y\in\Y: s(x,y)\le\hat\lambda\}.
\end{equation}

\begin{theorem}[\citealp{vovk2005,papadopoulos2002,lei2018}]
	\label{thm:split-cp}
	Let $S_1,\ldots,S_n$ be the calibration scores and $S_{n+1}=s(X_{n+1},Y_{n+1})$
	the score of a test point. If $(S_1,\ldots,S_n,S_{n+1})$ is
	exchangeable, then
	\begin{equation}\label{eq:split-cp-coverage}
		\Prob\!\bigl(Y_{n+1}\in\PredSet_{\hat\lambda}(X_{n+1})\bigr)
		\;\ge\; 1-\alpha.
	\end{equation}
\end{theorem}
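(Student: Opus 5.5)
The plan is to reduce coverage to a statement about the rank of $S_{n+1}$ among $S_1,\dots,S_{n+1}$, and then use exchangeability to control that rank. First, if $k_\alpha=n+1$, then $\hat\lambda=+\infty$ and the prediction set is all of $\Y$, so coverage is trivially $1\ge 1-\alpha$. So assume $k_\alpha\le n$. By construction \eqref{eq:pred-set}, $Y_{n+1}\in\PredSet_{\hat\lambda}(X_{n+1})$ if and only if $S_{n+1}\le\hat\lambda=S_{(k_\alpha)}$, where the order statistic is taken over the calibration scores only. The first step converts this into an event about the full sample. Let $S'_{(1)}\le\dots\le S'_{(n+1)}$ be the order statistics of all $n+1$ scores. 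I would check the deterministic equivalence
\[
S_{n+1}\le S_{(k_\alpha)} \iff S_{n+1}\le S'_{(k_\alpha)}.
\]
The forward direction holds because, if $S_{n+1}\le S_{(k_\alpha)}$, then at least $k_\alpha$ of the $n+1$ values are $\le S_{(k_\alpha)}$ and $S_{n+1}$ is among the smallest ones. The reverse direction holds because inserting $S_{n+1}$ can only lower order statistics, so $S'_{(k_\alpha)}\le S_{(k_\alpha)}$. Ties need care here, and I would handle them by comparing counts rather than positions.

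Second, I would use exchangeability. Define $E_i=\{S_i\le S'_{(k_\alpha)}\}$ for $i=1,\dots,n+1$. The full-sample order statistic $S'_{(k_\alpha)}$ is a symmetric function of $(S_1,\dots,S_{n+1})$. Exchangeability therefore gives $\Prob(E_i)=\Prob(E_{n+1})$ for all $i$, since permuting index $i$ with $n+1$ leaves the joint law and $S'_{(k_\alpha)}$ unchanged. Also, deterministically, $\sum_{i=1}^{n+1}\ind_{E_i}\ge k_\alpha$: at least $k_\alpha$ values are $\le$ the $k_\alpha$-th smallest, and ties only increase the count. Taking expectations gives $(n+1)\Prob(E_{n+1})\ge k_\alpha$, hence
\[
\Prob\bigl(Y_{n+1}\in\PredSet_{\hat\lambda}(X_{n+1})\bigr)=\Prob(E_{n+1})\ge \frac{\lceil (n+1)(1-\alpha)\rceil}{n+1}\ge 1-\alpha.
\]

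The main obstacle is the tie-robust equivalence in the first step. Working with the counting inequality $\sum_i\ind_{E_i}\ge k_\alpha$, rather than assuming almost surely distinct scores, avoids needing a tie-breaking randomization. It also yields only the lower bound, which is all the statement asks for.
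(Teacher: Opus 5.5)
Your proof is correct and is essentially the standard rank/exchangeability argument from the references the paper cites. The paper itself states Theorem~\ref{thm:split-cp} without proof; your deterministic reduction $S_{n+1}\le S_{(k_\alpha)}\iff S_{n+1}\le S'_{(k_\alpha)}$, followed by the symmetry-plus-counting bound $\sum_{i=1}^{n+1}\ind_{E_i}\ge k_\alpha$, is the tie-robust version of that argument and needs no continuity or tie-breaking assumption.

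The one step you should write out more carefully is the forward direction of the equivalence, which is cleanest by contradiction. Suppose $S_{n+1}>S'_{(k_\alpha)}$. Then the at least $k_\alpha$ full-sample values that are $\le S'_{(k_\alpha)}$ must all be calibration scores. Hence $S_{(k_\alpha)}\le S'_{(k_\alpha)}<S_{n+1}$, which contradicts $S_{n+1}\le S_{(k_\alpha)}$.
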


\paragraph{Validity and efficiency.}
CP performance is assessed along two
complementary axes. 
Given a test set $\{(X_j,Y_j)\}_{j=1}^m$, \emph{validity} is measured by the empirical coverage
\begin{equation}\label{eq:empirical-coverage}
    \widehat C
    =
    \frac{1}{m}\sum_{j=1}^{m}
    \ind\!\bigl(Y_j\in\PredSet_{\hat\lambda}(X_j)\bigr).
\end{equation}
This estimates the coverage probability in
Equation~\eqref{eq:split-cp-coverage}.
\emph{Efficiency} is measured by the average prediction-set size
\begin{equation}\label{eq:average-set-size}
    \widehat L
    =
    \frac{1}{m}\sum_{j=1}^m |\PredSet_{\hat\lambda}(X_j)|.
\end{equation}
For methods with comparable validity, smaller $\widehat L$ indicates
more informative prediction sets.

\subsection{Conformal prediction for LLMs}
\label{sec:cp-llm}
Recently, CP has been applied to LLMs for tasks such as question answering, open-ended
generation, and summarization
\citep{kumar2023mcq,su2024api,quach2024clm}. Unlike the standard
setting, the nonconformity score is derived from the LLM, and
therefore depends not only on the data $(X,Y)$ but also on factors
governing LLM inference such as the prompt template, decoding
parameters or model deployment. We collect these
factors in an auxiliary variable $\xi\in\Xi$ and write the score as
\[
S = V(Z,\xi).
\]
The classical notation $s(X,Y)$ corresponds to the special case in
which $\xi$ is fixed.

For example, when logits are accessible and $\Y$ is finite, the LLM
induces a predictive distribution $\hat p_\xi(\cdot\mid x)$ over
candidate labels. Two common scores are LAC \citep{sadinle2019},
\[
V((x,y),\xi)=1-\hat p_\xi(y\mid x),
\]
and the logit margin \citep{vovk2005},
\[
\begin{aligned}
V((x,y),\xi)
&=\max_{k\neq y}\{\log \hat p_\xi(k\mid x)\\
&\qquad-\log \hat p_\xi(y\mid x)\}.
\end{aligned}
\]
Both scores become larger when the candidate label is less supported by the
model under configuration $\xi$. When logits are unavailable or when $\Y$ is not a fixed finite candidate set as in open-ended question answering, scores
can instead be built from sampled generations; we give the black-box
scores used in this paper in Appendix~\ref{app:blackbox-scores}.

\section{Problem Formulation}
The coverage guarantee in Theorem~\ref{thm:split-cp} requires
exchangeability of $(S_1,\ldots,S_n,S_{n+1})$. In classical CP,
this holds whenever the data are i.i.d.\ and the score function is
fixed. In the LLM setting, however, each score
$S=V(Z,\xi)$ depends on the inference configuration $\xi$, and
exchangeability holds only when calibration and test scores are
produced under the same $\xi$. 

In practice, a practitioner may
modify the prompt template, adjust decoding parameters, or switch to a quantised model variant. The data distribution
$P_Z$ remains unchanged, yet the score distribution shifts:
$P_S^{\xi_0}\neq P_S^{\xi'}$. We refer to this as
\emph{configuration shift}, to distinguish it from the covariate or
label shifts studied in the broader CP literature \citep{tibshirani2019covariate,gibbs2021adaptive,barber2023,cauchois2024robust}, where the data
distribution $P_Z$ itself changes.

Let $\xi_0\in\Xi$ denote the \emph{reference configuration}
under which the calibration set
$\Dcal=\{(X_i,Y_i)\}_{i=1}^n$ is collected, yielding calibration
scores
\[
S_i = V(Z_i,\xi_0), \qquad i=1,\ldots,n.
\]
At test time the configuration is perturbed to some
$\xi'\in\Xi$, so that for a test point
$Z_{n+1}=(X_{n+1},Y_{n+1})$ the test score is
\[
S_{n+1} = V(Z_{n+1},\xi').
\]
Even if $Z_1,\ldots,Z_{n+1}$ are i.i.d.\ from $P_Z$, the
joint sequence $(S_1,\ldots,S_n,S_{n+1})$ can fail to be
exchangeable when $\xi'\neq\xi_0$, and the guarantee
\eqref{eq:split-cp-coverage} no longer applies.

\section{Empirical Evidence of Configuration Sensitivity}
\label{sec:empirical}

We empirically test whether the configuration shift defined in
Section~\ref{sec:cp-llm} produces measurable failures of CP validity
and efficiency.

\subsection{Experimental Setting}
\label{sec:exp-setting}

\paragraph{Models.}
We evaluate nine open-weight LLMs spanning three
families:
Llama-3.2-Instruct 1B/3B and Meta-Llama-3-8B-Instruct \citep{meta2024llama32,grattafiori2024llama3}; Gemma-3-it 1B/4B/12B \citep{gemmateam2025gemma3}; and Qwen3.5 0.8B/4B/9B \citep{qwen3.5}. 

\paragraph{Datasets.}
We use four widely adopted benchmarks covering two task settings.
For multiple-choice question answering (MCQA) with a fixed
four-option label space we use \emph{MMLU}
\citep{hendrycks2021mmlu} and \emph{MedMCQA}
\citep{pal2022medmcqa}. For open-ended question answering (OEQA)
with an unbounded answer space we use \emph{Natural Questions (NQ)}
\citep{kwiatkowski2019nq} and \emph{TriviaQA}
\citep{joshi2017triviaqa}.

\paragraph{Nonconformity scores.}
We instantiate the two white-box scores introduced in
Section~\ref{sec:cp-llm} (LAC and logit margin; more details in
Appendix~\ref{app:whitebox-scores}), together with the
two black-box scores defined in Appendix~\ref{app:blackbox-scores}
(self-consistency and LoFreeCP).

\paragraph{Configuration shifts.}
We study three configuration axes that practitioners vary
in deployed LLM pipelines while keeping the data distribution $P_Z$
unchanged:
\begin{enumerate}[leftmargin=*,nosep]
    \item \emph{Prompt template shift.} For each task type we sweep
    over a suite of prompt templates that preserve task semantics but
    vary the inference prompt along three axes: (i)~\emph{surface
    format} of the answer options (MCQA) or of the question/answer
    prefix tokens (OEQA); (ii)~\emph{instruction syntax and wording},
    spanning persona, verbosity, and output schema; and
    (iii)~\emph{content ablation}, in which exactly one scaffolding
    component (subject preamble, chain-of-thought directive, or
    few-shot demonstration) is removed. The MCQA suite (MMLU,
    MedMCQA) contains $14$ templates anchored at a paper-faithful
    one-shot Baseline~\citep{kumar2023mcq}; the OEQA suite (NQ,
    TriviaQA) contains $7$ templates anchored at a LoFreeCP-style
    five-shot Baseline~\citep{su2024api}. Concrete templates are documented in
    Appendix~\ref{app:prompt-templates}.
    \item \emph{Temperature shift.} Five LLM decoding temperatures
    $T\in\{0.5, 0.7, 1.0, 1.3, 1.5\}$, with $T=1.0$ as the reference.
    \item \emph{Quantization shift.} Four GGUF precisions
    (\texttt{Q8}, \texttt{Q6}, \texttt{Q4}, and \texttt{Q2})
    for three instruction-tuned models: Qwen3.5-9B,
    Meta-Llama-3-8B-Instruct, and Gemma-3-12B-it
    \citep{bartowski_qwen35_9b_gguf,
    	quantfactory_llama3_8b_instruct_gguf,
    	unsloth_gemma3_12b_it_gguf}.
\end{enumerate}

\paragraph{Protocol.}
Calibration is performed under the reference configuration $\xi_0$, while
testing is performed under a target configuration $\xi'$, including the
in-distribution case $\xi'=\xi_0$.
For each cell
$(\text{model},\text{dataset},\text{score},\xi_0,\xi')$, we run $R=10$
independent trials to account for variability from data sampling. Each
trial uses a fixed random seed to sample disjoint calibration and test
sets from the evaluation pool. By default, MMLU is stratified by subject,
with $30$ calibration and $30$ test examples per subject, while MedMCQA,
NQ, and TriviaQA use random splits with $800$ calibration and $800$ test
examples. Calibration scores are computed under $\xi_0$, the threshold
$\hat\lambda$ is set by Equation~\eqref{eq:lambda-hat}, and test scores
and prediction sets are computed under $\xi'$.

\begin{figure*}[t]
	\centering
	\begin{subfigure}[t]{0.40\textwidth}
		\centering
		\includegraphics[width=\linewidth]{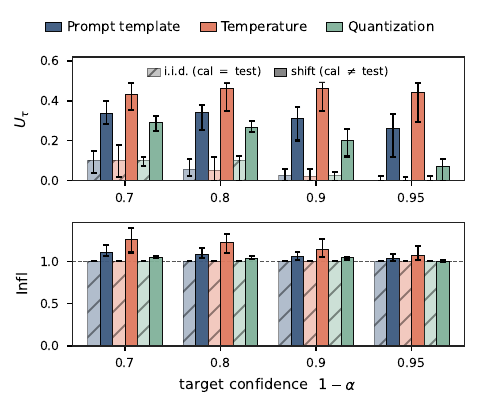}
		\caption{Target-confidence sweep at the full calibration budget.}
		\label{fig:cov-target-sweep}
	\end{subfigure}\hfill
	\begin{subfigure}[t]{0.58\textwidth}
		\centering
		\includegraphics[width=\linewidth]{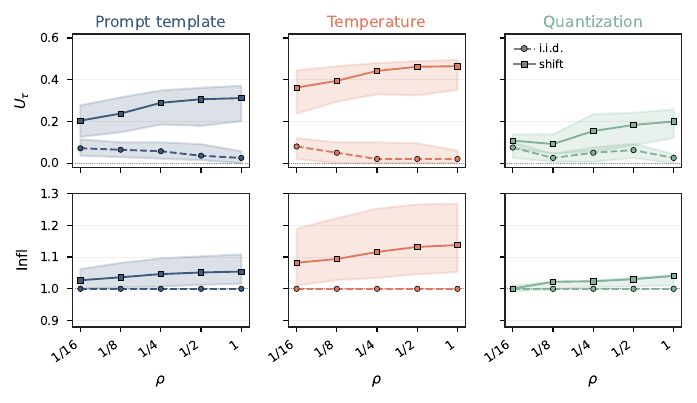}
		\caption{Calibration-budget sweep at fixed target $1-\alpha=0.9$.}
		\label{fig:ncal-sweep}
	\end{subfigure}
	\caption{\textbf{Conformalized LLMs are highly sensitive to configuration shift,
			and the failure intensifies with larger calibration budgets.}
		For each $(\text{dataset},\text{model},\text{score})$ configuration, we
		pool $10$ random calibration/test splits and compute undercoverage rate $U_\tau$ and
		set-size inflation $\mathrm{Infl}$ separately for in-distribution ($\xi'=\xi_0$) and
		shifted ($\xi'\neq\xi_0$) cells. Bars and points show medians across
		configurations; error bars and shaded bands show interquartile ranges. Colours indicate shift source, with
		hatched/dashed marks for in-distribution and solid marks for shifted
		results. \emph{Left} (\ref{fig:cov-target-sweep}): target-confidence sweep at full calibration budget. \emph{Right} (\ref{fig:ncal-sweep}): calibration-budget sweep
		$\rho=n_{\mathrm{cal}}/n_{\mathrm{cal}}^{\max}$ at fixed
		$1-\alpha=0.9$, using prefixes of the full calibration pool and a fixed
		test set.}
	\label{fig:results}
\end{figure*}

\paragraph{Evaluation metrics.}
Each evaluated cell-trial pair $a=(\xi_0,\xi',r)$ yields empirical
coverage $\widehat C_a$ and average set size $\widehat L_a$ as in
Equations~\eqref{eq:empirical-coverage}--\eqref{eq:average-set-size}.
For each $(\text{model},\text{dataset},\text{score})$ configuration,
we partition these pairs into the \emph{i.i.d.} subset
$\mathcal A_{\mathrm{iid}}=\{a:\xi'=\xi_0\}$ and the
\emph{shift} subset
$\mathcal A_{\mathrm{shift}}=\{a:\xi'\neq\xi_0\}$. 

Configuration shift can affect CP in two ways:
\emph{validity failure}, where coverage drops below the target, and
\emph{efficiency degradation}, where coverage is maintained only through
larger prediction sets. We therefore report two metrics:
\begin{itemize}[leftmargin=*,nosep]
    \item \emph{Undercoverage rate.}
    \[
    U_\tau(\mathcal A)
    =
    \frac{1}{|\mathcal A|}
    \sum_{a\in\mathcal A}\ind\{\widehat C_a<1-\alpha-\tau\}.
    \]
	This measures the fraction of runs in which empirical coverage falls more than a tolerance $\tau$ below the target.
	The tolerance  prevents sampling variability in $\widehat C_a$ from being counted as a coverage violations.
    \item \emph{Set-size inflation.}
    Let $\mathcal A^{+}=\{a\in\mathcal A:\widehat C_a\ge 1-\alpha\}$,
    and let $\bar L_{\mathrm{iid}}(\xi)$ be the mean set size among valid
    i.i.d.\ runs with deployment configuration $\xi$. We define
    \[
    \mathrm{Infl}(\mathcal A)
    =
    \frac{1}{|\mathcal A^{+}|}
    \sum_{a=(\xi_0,\xi',r)\in\mathcal A^{+}}
    \frac{\widehat L_a}{\bar L_{\mathrm{iid}}(\xi')}.
    \]
    By conditioning on $\mathcal A^{+}$, this measures efficiency only
    among valid runs, relative to each run's test-matched i.i.d.\ baseline.
\end{itemize}

\subsection{Results}
\label{sec:empirical-results}

Figure~\ref{fig:results} reports the two failure metrics under the
three configuration shifts. The left panel sweeps the target
confidence $1-\alpha\in\{0.7,0.8,0.9,0.95\}$ at the full calibration budget; the right panel
sweeps the calibration budget $\rho$ at fixed target $1-\alpha=0.9$.

\textbf{Across all target confidence levels and calibration budgets, configuration shift substantially increases the undercoverage rate $U_\tau$ relative to the in-distribution setting}. At the
$1-\alpha=0.9$, the shifted
$U_\tau$ is $0.46$ for decoding temperature, $0.31$ for prompt
template, and $0.20$ for quantization, versus an in-distribution
baseline of at most $0.03$. The in-distribution $\mathrm{Infl}$ is
$1.000$ by construction, while shifted $\mathrm{Infl}$ lies in
$[1.00,1.26]$ across all shifts, targets, and budgets, so among runs
that retain validity, \textbf{set sizes remain near their i.i.d.\
baselines}; under configuration shift the dominant failure mode is
validity loss rather than efficiency degradation. Decoding temperature
is the most damaging axis on \emph{both} metrics: it produces the
largest $U_\tau$ at every target and the only clear set-size inflation
signal, peaking at $1.26$ for temperature shift at target $0.7$.

As the calibration budget increases (Figure~\ref{fig:ncal-sweep}),
the in-distribution $U_\tau$ decays toward zero as expected,
\textbf{whereas the shifted $U_\tau$ increases with $\rho$}: $0.20\!\to\!0.31$ for prompt shift, $0.36\!\to\!0.46$ for temperature shift and $0.11\!\to\!0.20$ for quantization shift.
The trend follows from the calibration--test
mismatch. As $n_{\mathrm{cal}}$ grows, $\hat\lambda$ converges
deterministically to the $(1-\alpha)$-quantile of the
\emph{calibration} score distribution, which is biased relative to
the quantile the shifted test distribution requires for the target
coverage; larger $n_{\mathrm{cal}}$ therefore exposes this bias
rather than absorbing it. 
Due to space limitations, we defer richer fine-grained empirical results to Appendix~\ref{app:extra-results}.

\section{Statistical Guarantees under Configuration Shift}
\label{sec:bounds}

The experiments in Section~\ref{sec:empirical} show that changing only
the LLM configuration can reduce empirical coverage; we now identify
the quantity that controls this loss. CP calibrates on scores from $\xi_0$ but deploys on scores from $\xi'$, so coverage loss depends on the discrepancy between the two score distributions.

For $Z\sim P_Z$, let $P_S^{\xi}$ denote the distribution of the score
$V(Z,\xi)$, with cumulative distribution function (CDF) $F_S^{\xi}(t):=P_S^{\xi}((-\infty,t])$; write
$P_S^{\xi_0}$ and $P_S^{\xi'}$ for the calibration and test score
distributions. After calibration, the threshold $\hat\lambda$ defines
the random acceptance region $E:=(-\infty,\hat\lambda]$ in score space.
The test coverage is
\begin{equation}\label{eq:deployment-coverage}
    C := \Prob\!\bigl(S_{n+1}\le\hat\lambda\bigr)
       = \Prob\!\bigl(Y_{n+1}\in\PredSet_{\hat\lambda}(X_{n+1})\bigr)
\end{equation}
with $S_{n+1}\sim P_S^{\xi'}$. CP controls the calibration-side
quantity $\E[P_S^{\xi_0}(E)]\ge 1-\alpha$; deployment replaces
$P_S^{\xi_0}$ by $P_S^{\xi'}$, so the coverage loss is controlled by
the expected discrepancy between the two laws on the random half-line
$E$.

\paragraph{Score-space lower bounds.}
A standard discrepancy for coverage analysis under distribution shift
is total variation (TV), $\dtv(P,Q):=\sup_A |P(A)-Q(A)|$, with the
supremum over measurable sets. Adapting robust-validation arguments
\citep{cauchois2024robust,barber2023} to the score pushforward gives
a first bound.

\begin{proposition}[Score-space TV lower bound]
    \label{prop:score-tv}
    Let $S_1,\ldots,S_n\overset{\mathrm{iid}}{\sim}P_S^{\xi_0}$ be the
    calibration scores and $S_{n+1}\sim P_S^{\xi'}$ independent of
    them. Then
    \begin{equation}\label{eq:score-tv-lower}
        C \;\ge\; 1-\alpha-\dtv(P_S^{\xi_0},P_S^{\xi'}).
    \end{equation}
\end{proposition}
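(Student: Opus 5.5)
The plan is to condition on the calibration scores, compare the two score laws on the random acceptance half-line $E=(-\infty,\hat\lambda]$, and then average. The key observation is that $\hat\lambda$ is a measurable function of $S_1,\ldots,S_n$ only, and $S_{n+1}$ is independent of these. Hence, conditionally on $(S_1,\ldots,S_n)$, the set $E$ is a fixed measurable set, and the conditional coverage is
\[
\Prob\bigl(S_{n+1}\le\hat\lambda \mid S_1,\ldots,S_n\bigr)=P_S^{\xi'}(E).
\]
When $\hat\lambda=+\infty$, $E=\R$ and the bound is trivial. By the definition of total variation, for every fixed measurable set $A$ we have $P_S^{\xi'}(A)\ge P_S^{\xi_0}(A)-\dtv(P_S^{\xi_0},P_S^{\xi'})$. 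Applying this pointwise with $A=E$ and taking expectations over the calibration sample gives
\[
C=\E\bigl[P_S^{\xi'}(E)\bigr]\;\ge\;\E\bigl[P_S^{\xi_0}(E)\bigr]-\dtv(P_S^{\xi_0},P_S^{\xi'}).
\]

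It remains to show $\E[P_S^{\xi_0}(E)]\ge 1-\alpha$. For this I introduce an auxiliary ghost score $\tilde S_{n+1}\sim P_S^{\xi_0}$, drawn independently of $S_1,\ldots,S_n$. Then $(S_1,\ldots,S_n,\tilde S_{n+1})$ is i.i.d., hence exchangeable. Theorem~\ref{thm:split-cp} applies to this sequence; equivalently, it is the standard quantile argument on scores, with the same threshold $\hat\lambda$. It yields $\Prob(\tilde S_{n+1}\le\hat\lambda)\ge 1-\alpha$. Conditioning again on the calibration scores, $\Prob(\tilde S_{n+1}\le\hat\lambda)=\E[P_S^{\xi_0}(E)]$. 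Combining this with the previous display gives \eqref{eq:score-tv-lower}.

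The only step needing care is the measurability and conditioning: that $\hat\lambda$, defined as an order statistic (with the $+\infty$ convention), is a measurable function of the calibration scores. This is what lets Fubini reduce the unconditional probability to $\E[P(E)]$ under both laws. A further subtlety is that Theorem~\ref{thm:split-cp} is phrased in terms of labels and prediction sets. I would note that its proof uses only the event $\{S_{n+1}\le\hat\lambda\}$ on exchangeable scores, so it applies verbatim to the ghost score, with no need for an actual test pair under $\xi_0$.
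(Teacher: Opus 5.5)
Your proposal is correct and follows essentially the same route as the paper. The paper also conditions on the calibration scores to write $C=\E[P_S^{\xi'}(E)]$ (its integral-identity lemma) and applies the definition of total variation to the realized region $E$ almost surely. It then obtains $\E[P_S^{\xi_0}(E)]\ge 1-\alpha$ by applying Theorem~\ref{thm:split-cp} to the calibration sample augmented with an independent reference score $T_P\sim P_S^{\xi_0}$, which is exactly your ghost score.
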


Proposition~\ref{prop:score-tv} is loose for CP because the
acceptance region is never an arbitrary measurable set: the threshold
$\hat\lambda$ always yields the half-line $E=(-\infty,\hat\lambda]$.
Restricting the TV supremum to half-lines gives the
Kolmogorov--Smirnov distance
\begin{equation}\label{eq:ks-def}
    \dks(P_S^{\xi_0},P_S^{\xi'})
    :=\sup_{t\in\R}\bigl|F_S^{\xi_0}(t)-F_S^{\xi'}(t)\bigr|,
\end{equation}
which upper-bounds $|P_S^{\xi_0}(E)-P_S^{\xi'}(E)|$ for every
realized calibration sample.

\begin{theorem}[Score-space KS lower bound]
    \label{thm:score-ks}
    Under the assumptions of Proposition~\ref{prop:score-tv},
    \begin{equation}\label{eq:score-ks-lower}
        C \;\ge\; 1-\alpha-\dks(P_S^{\xi_0},P_S^{\xi'}).
    \end{equation}
\end{theorem}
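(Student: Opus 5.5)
The plan is to condition on the calibration scores and compare the two score laws on the single random half-line $E=(-\infty,\hat\lambda]$. Since $S_{n+1}$ is independent of $S_1,\ldots,S_n$, and hence of $\hat\lambda$, conditioning on the calibration sample gives
\[
C=\E\bigl[\Prob(S_{n+1}\le\hat\lambda\mid S_{1:n})\bigr]=\E\bigl[F_S^{\xi'}(\hat\lambda)\bigr],
\]
with the convention $F_S^{\xi'}(+\infty)=1$. The same computation for an auxiliary test score $S'_{n+1}\sim P_S^{\xi_0}$, drawn independently of $S_{1:n}$, gives the calibration-side quantity $\E[F_S^{\xi_0}(\hat\lambda)]$.

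\textbf{Step 1.} Show that $\E[F_S^{\xi_0}(\hat\lambda)]\ge 1-\alpha$. The variables $S_1,\ldots,S_n,S'_{n+1}$ are i.i.d.\ and therefore exchangeable, so Theorem~\ref{thm:split-cp} applies to the sequence $(S_1,\ldots,S_n,S'_{n+1})$. It gives $\Prob(S'_{n+1}\le\hat\lambda)\ge1-\alpha$, and this probability equals $\E[F_S^{\xi_0}(\hat\lambda)]$ by the same conditioning argument.

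\textbf{Step 2.} Compare the two CDFs pointwise. For every realization with $\hat\lambda<\infty$,
\[
F_S^{\xi'}(\hat\lambda)\ge F_S^{\xi_0}(\hat\lambda)-\sup_t|F_S^{\xi_0}(t)-F_S^{\xi'}(t)|=F_S^{\xi_0}(\hat\lambda)-\dks(P_S^{\xi_0},P_S^{\xi'}).
\]
When $\hat\lambda=+\infty$, both CDF values equal $1$, so the inequality holds trivially.

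\textbf{Step 3.} Take expectations in Step 2 and combine with Step 1:
\[
C\ge \E[F_S^{\xi_0}(\hat\lambda)]-\dks\ge 1-\alpha-\dks(P_S^{\xi_0},P_S^{\xi'}).
\]

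The only delicate point is the measure-theoretic justification in the first display: the identity $\Prob(S_{n+1}\le\hat\lambda)=\E[F_S^{\xi'}(\hat\lambda)]$. It follows from Fubini's theorem together with the independence of $S_{n+1}$ from the calibration sample. One also needs $\hat\lambda$ to be a measurable function of $S_{1:n}$, which holds because order statistics are measurable. Ties and atoms in the score distributions cause no trouble, since every step uses the closed half-line $(-\infty,t]$, which is exactly what $F$ measures and what $\dks$ controls. Finally, the bound is never weaker than Proposition~\ref{prop:score-tv}, because half-lines are a subclass of measurable sets and so $\dks\le\dtv$.
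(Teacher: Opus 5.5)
Your proposal is correct and follows essentially the same route as the paper. Like the paper, you write $C=\E[P_S^{\xi'}(E)]$ by conditioning on the calibration sample, bound $\E[P_S^{\xi_0}(E)]\ge 1-\alpha$ by applying Theorem~\ref{thm:split-cp} to the augmented i.i.d.\ sequence, and control the pathwise gap by evaluating the KS supremum at $t=\hat\lambda$, treating $\hat\lambda=+\infty$ separately; the paper simply packages these steps into Lemma~\ref{lem:coverage-gap}, which it reuses for the TV and label-free bounds.
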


Since $\dks\le\dtv$, Theorem~\ref{thm:score-ks} is at least as tight
as Proposition~\ref{prop:score-tv}, and strictly tighter whenever
$\dks<\dtv$.

\paragraph{Label-free lower bounds.}
The score-space bounds are diagnostic but not fully
deployment-friendly: estimating $P_S^{\xi'}$ requires labels. We therefore relate the score-distribution shift to
a predictor state that is observable without $Y$. Let $U\in\U$ be a
measurable function of the pipeline output, for example, the
predictive simplex $\hat p_\xi(\cdot\mid X)$ for white-box scores or
the empirical sample-frequency vector for black-box scores. Write
$P_U^{\xi_0},P_U^{\xi'}$ for the distributions of $U$ under $\xi_0$
and $\xi'$. The following assumption allows the predictor-state distribution to
shift across configurations, while keeping the score distribution
conditional on the predictor state invariant.

\begin{assumption}[Kernel stability]\label{assump:kernel-stability}
    There exists a Markov kernel $K$ from $\U$ to $\R$ such that, for
    every Borel set $A\subseteq\R$,
    $P_S^{\xi_0}(A)=\int K(A\mid u)\,dP_U^{\xi_0}(u)$ and
    $P_S^{\xi'}(A)=\int K(A\mid u)\,dP_U^{\xi'}(u)$.
\end{assumption}

For $V((x,y),\xi)=\phi(\hat p_\xi(\cdot\mid x),y)$, this holds if
the conditional distribution of $Y$ given $U$ is invariant across
$\xi_0$ and $\xi'$.

\begin{theorem}[Label-free coverage lower bounds]
    \label{thm:label-free}
    Under the assumptions of Proposition~\ref{prop:score-tv} and
    Assumption~\ref{assump:kernel-stability},
    \begin{equation}\label{eq:label-free-tv}
        C \;\ge\; 1-\alpha-\dtv(P_U^{\xi_0},P_U^{\xi'}).
    \end{equation}
    If, in addition, $(\U,\rho)$ is a Polish metric space,
    $P_U^{\xi_0},P_U^{\xi'}$ have finite first moments, and
    $u\mapsto K((-\infty,t]\mid u)$ is $L$-Lipschitz uniformly in
    $t\in\R$, then
    \begin{equation}\label{eq:label-free-w1}
        C \;\ge\; 1-\alpha-L\,\Wone^{\rho}(P_U^{\xi_0},P_U^{\xi'}).
    \end{equation}
\end{theorem}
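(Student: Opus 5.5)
The plan is to reduce both label-free bounds to the score-space KS bound of Theorem~\ref{thm:score-ks}: we show that each right-hand discrepancy upper-bounds $\dks(P_S^{\xi_0},P_S^{\xi'})$. Coverage then follows at once from \eqref{eq:score-ks-lower}. The reduction uses Assumption~\ref{assump:kernel-stability} to write every score CDF as an integral of the kernel against the predictor-state law. For each $t\in\R$, put $g_t(u):=K((-\infty,t]\mid u)\in[0,1]$. By kernel stability,
\[
F_S^{\xi_0}(t)-F_S^{\xi'}(t)=\int g_t\,dP_U^{\xi_0}-\int g_t\,dP_U^{\xi'} .
\]
Bounding this difference uniformly in $t$ bounds $\dks$.

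\emph{TV part.} We use the standard fact that for any measurable $g:\U\to[0,1]$,
\[
\Bigl|\int g\,dP-\int g\,dQ\Bigr|\le\dtv(P,Q).
\]
One proof takes a dominating measure $\mu$ with densities $p,q$. Then $\int g\,(p-q)\,d\mu\le\int_{\{p>q\}}(p-q)\,d\mu=P(A)-Q(A)$ for $A=\{p>q\}$, and the symmetric argument gives the lower side. Applying this with $g=g_t$, measurable by the definition of a Markov kernel, gives $\dks(P_S^{\xi_0},P_S^{\xi'})\le\dtv(P_U^{\xi_0},P_U^{\xi'})$. Substituting into \eqref{eq:score-ks-lower} yields \eqref{eq:label-free-tv}.

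\emph{Wasserstein part.} Now $g_t$ is $L$-Lipschitz with respect to $\rho$, uniformly in $t$. The finite first moments make $\Wone^{\rho}$ finite. On a Polish space the Kantorovich--Rubinstein duality gives
\[
\Wone^{\rho}(P,Q)=\sup_{\|f\|_{\mathrm{Lip}}\le1}\Bigl|\int f\,dP-\int f\,dQ\Bigr| .
\]
Applying this to $f=g_t/L$ (for $L>0$; the case $L=0$ is trivial, since $g_t$ is then constant) gives $|F_S^{\xi_0}(t)-F_S^{\xi'}(t)|\le L\,\Wone^{\rho}(P_U^{\xi_0},P_U^{\xi'})$ for all $t$. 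Taking the supremum over $t$ and invoking Theorem~\ref{thm:score-ks} gives \eqref{eq:label-free-w1}.

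If we want to avoid duality, we can instead take an optimal coupling $(U,U')$, which exists on a Polish space. Then
\[
\bigl|\E[g_t(U)]-\E[g_t(U')]\bigr|\le L\,\E[\rho(U,U')]=L\,\Wone^{\rho}(P_U^{\xi_0},P_U^{\xi'}).
\]

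The main obstacle is making sure Theorem~\ref{thm:score-ks} applies verbatim. Its hypotheses concern only the score laws: i.i.d.\ calibration scores from $P_S^{\xi_0}$ and an independent test score from $P_S^{\xi'}$. These are exactly the assumptions of Proposition~\ref{prop:score-tv}, which we retain here, and the kernel assumption is used only to compare the marginal score CDFs. The remaining details are routine: measurability of $u\mapsto g_t(u)$, and the existence of optimal couplings or the duality statement under the Polish and finite-moment conditions.
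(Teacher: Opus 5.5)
Your proof is correct and follows the paper's argument. Kernel stability turns each score-law difference into $\int g\,d(P_U^{\xi_0}-P_U^{\xi'})$ for a $[0,1]$-valued (respectively $L$-Lipschitz) $g$, the TV dual characterization (respectively Kantorovich--Rubinstein duality) bounds it, and your $W_1$ part matches the paper step for step. The only deviation is in the TV part: you bound only the half-line gaps and invoke Theorem~\ref{thm:score-ks}, whereas the paper proves the full data-processing inequality $\dtv(P_S^{\xi_0},P_S^{\xi'})\le\dtv(P_U^{\xi_0},P_U^{\xi'})$ over all Borel sets and invokes Proposition~\ref{prop:score-tv}; both routes give the same bound.
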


Equation~\eqref{eq:label-free-tv} follows by data processing from
Proposition~\ref{prop:score-tv}; \eqref{eq:label-free-w1} applies
Kantorovich--Rubinstein duality to each CDF gap of
Theorem~\ref{thm:score-ks}, where $\Wone^\rho$ denotes the
$1$-Wasserstein distance on $(\U,\rho)$. The Wasserstein form can be
tighter when $U$ lies in a low-dimensional continuous space, such as a
predictive simplex: nearby but disjoint predictor-state distributions
may have small transport cost even when their TV distance is large.

Full proofs are provided in Appendix~\ref{app:proofs}.

\subsection{Empirical diagnostics}
\label{sec:bounds-empirical}
The bounds in Section~\ref{sec:bounds} identify distributional discrepancies as the quantities controlling coverage
loss. We test whether finite-sample estimates of these discrepancies are
useful empirical diagnostics. For a given bound, let
$\widehat\Delta$ denote the corresponding plug-in discrepancy estimate
(e.g., the empirical KS distance for Equation~\eqref{eq:score-ks-lower})
and set $b_L=1-\alpha-\widehat\Delta$ as a plug-in lower-bound proxy. 
For each shifted cell and trial, we compute empirical coverage
$\widehat C$ (Equation~\eqref{eq:empirical-coverage}), form $b_L$, and
report the empirical conservativeness rate $\Prob(\widehat C\ge b_L)$, 
together with the average slack $\widehat C-b_L$.
Table~\ref{tab:bounds-empirical} aggregates these results across shift
cells, datasets, scores, and trials; estimator details are in
Appendix~\ref{app:bound-verification}.

We refer to the four plug-in bound proxies by the discrepancy they use:
TV$_S$, KS$_S$, TV$_U$, and $W_1$. 
\textbf{Both score-space proxies, TV$_S$ and KS$_S$, are empirically conservative on every cell in Table~\ref{tab:bounds-empirical},
and the KS$_S$  is $0.03$--$0.15$ sharper than TV$_S$ on
every shift--task pair}. 
This reflects $\dks\le\dtv$ and
realizes the half-line gain of Theorem~\ref{thm:score-ks}.

Replacing $P_S$ by the label-free predictor-state distribution $P_U$
pays the data-processing tax: TV$_U$ is typically looser than
TV$_S$ because
pushing $U$ through the score kernel can only \emph{shrink}
$\dtv(P_U^{\xi_0},P_U^{\xi'})$ down to $\dtv(P_S^{\xi_0},P_S^{\xi'})$.
Empirical conservativeness remains $\ge 0.97$; the residual failures
are consistent with mild violations of kernel stability
(Assumption~\ref{assump:kernel-stability}) and finite-sample estimation
error.
%(see Appendix~\ref{app:kernel-stability-diagnostic}).

\textbf{The predictor-state $W_1$ proxy exploits simplex geometry and is the
sharpest of all four}, with slack $0.07$--$0.23$ and margins of up to
$0.10$ over KS$_S$. This sharpness comes at the cost of lower empirical
conservativeness ($0.85$--$0.99$), 
reflecting the additional smoothness condition in
Theorem~\ref{thm:label-free} and our heuristic sliced-$W_1$, $L=1$.
\paragraph{Takeaway.}
When test labels are available, KS$_S$ is the most reliable plug-in
diagnostic in our experiments: it is empirically conservative on every
aggregate cell and uniformly sharper than TV$_S$. In the realistic
deployment regime where test labels are unavailable, $W_1$ provides
the sharpest heuristic signal, trading a small conservativeness slip
($\le 15\%$ in the worst cells) for a substantial sharpness gain over
the label-free TV$_U$.

\begin{table}[t]
    \centering
    \footnotesize
    \setlength{\tabcolsep}{4pt}
    \begin{tabular}{l cccc}
        \toprule
        Shift & TV$_S$ & KS$_S$ & TV$_U$ & $W_1$ \\
        \midrule
        \multicolumn{5}{l}{\emph{MCQA (MMLU, MedMCQA)}}\\
        prompt        & .26\,(1.00) & .14\,(1.00) & .33\,(1.00) & .13\,(.92) \\
        temperature   & .35\,(1.00) & .20\,(1.00) & .34\,(.97)  & .15\,(.88) \\
        quantization  & .18\,(1.00) & .07\,(1.00) & .22\,(1.00) & .07\,(.99) \\
        \midrule
        \multicolumn{5}{l}{\emph{OEQA (NQ, TriviaQA)}}\\
        prompt        & .29\,(1.00) & .22\,(1.00) & .31\,(.99)  & .16\,(.90) \\
        temperature   & .36\,(1.00) & .33\,(1.00) & .51\,(1.00) & .23\,(.85) \\
        quantization  & .28\,(1.00) & .18\,(1.00) & .30\,(1.00) & .14\,(.99) \\
        \bottomrule
    \end{tabular}
    \caption{\textbf{Empirical evaluation of plug-in lower-bound proxies at $1-\alpha=0.9$},
    	aggregated over shift cells
    	($\xi'\neq\xi_0$) of Section~\ref{sec:empirical}, pooled
    	across model, dataset, score, and trial. Each cell reports
    	average slack $\widehat C-b_L$ with empirical conservativeness rate
    	$\Prob(\widehat C\ge b_L)$ in parentheses, where
    	$b_L=1-\alpha-\widehat\Delta$. }\label{tab:bounds-empirical}
\end{table}

\section{Mitigating Configuration Shift}
\label{sec:mitigations}

We evaluate six mitigations organized by where each comes from: two
are adapted from existing work (\emph{Reweight}, \emph{Mos-U}), two
are derived from the bounds of Section~\ref{sec:bounds}
(\emph{$\alpha$-Inf}, \emph{Recal}), and two are motivated by the
per-configuration fragility patterns of Section~\ref{sec:empirical}
(\emph{Mos-F}, \emph{Anc}). All six share the calibration data
$\Dcal$, target $1-\alpha$, and test configuration $\xi'$ and differ only in how the threshold or the nonconformity scores are constructed. Full method specifications and implementation details are in
Appendix~\ref{app:mitigation-detail}.

\subsection{Methods}
\label{sec:mit-methods}

\paragraph{Adapted from prior work (Reweight, Mos-U).}
\emph{Reweight} ports weighted split CP
\citep[Section 2.3]{tibshirani2019covariate} to predictor-state
space: a logistic discriminator $\hat\pi$ between calibration and
deployment predictor states yields importance weights
$w_i=\hat\pi/(1-\hat\pi)$, and $\hat\lambda$ is the $w$-weighted
$(1-\alpha)$-quantile of the calibration scores.
\emph{Mos-U} is the calibration analogue of prompt ensembling \cite{kumar2023mcq}: calibration scores are computed under multiple configurations drawn uniformly from an augmentation pool of alternative configurations.

\paragraph{Bound-inspired ($\alpha$-Inf, Recal).}
Both methods turn the predictor-state bound of
Theorem~\ref{thm:label-free} into an actionable correction by
estimating
$\widehat\varepsilon = L\cdot\widehat\Wone(\hat P_U^{\xi_0},\hat P_U^{\xi'})$
from empirical predictor distributions. 
\emph{$\alpha$-Inf} deflates the miscoverage budget to $\alpha'=\max(\alpha-\widehat\varepsilon,0)$, raising the calibration quantile to $1-\alpha'$ to pre-pay the shift-induced coverage gap. 
\emph{Recal} is the gated variant: when $\widehat\varepsilon$ exceeds a noise floor $\beta$ estimated under no shift, it spends a small labeled deployment budget $k$ to recompute $\hat\lambda$ directly; otherwise it falls back.

\paragraph{Fragility-aware (Mos-F, Anc).}
Both extend Mos-U on the principle that calibrating against harder
cases yields more conservative thresholds
\citep{aolaritei2025lp,luo2024gametheory}. Each candidate
configuration is scored off-line by how often vanilla CP undercovers
it on the same configuration grid as Section~\ref{sec:empirical} but auxiliary
audit splits disjoint from the final mitigation trials
(Figure~\ref{fig:fragility-profile} illustrates the resulting profile). \emph{Mos-F} samples
each calibration configuration with probability proportional to its
undercoverage rate; \emph{Anc} restricts the pool to the reference
configuration plus the top-$K$ configurations with the highest
undercoverage rates. 

\begin{figure}[t]
    \centering
    \includegraphics[width=0.85\linewidth]{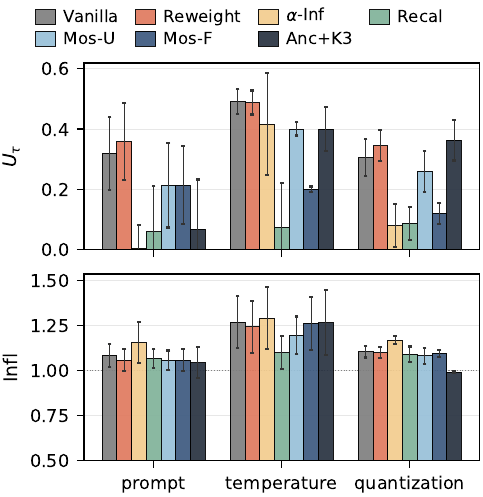}
    \caption{\textbf{Mitigation comparison across the three shift
    axes.} Median undercoverage rate $U_\tau$ (top) and
    set-size inflation $\mathrm{Infl}$ (bottom) at $1-\alpha=0.9$, across
    $(\text{model},\text{dataset},\text{score})$ configurations. Error
    bars are cross-configuration standard deviations. The
    \emph{Anc+K3} label denotes \emph{Anc} instantiated with $K=3$
    (a $K$-ablation is deferred to Appendix~\ref{app:mitigation-detail}).}
    \label{fig:mitigations}
\end{figure}

\subsection{Empirical Results}
\label{sec:mit-results}

We report the same metrics as
Section~\ref{sec:empirical}. Figure~\ref{fig:mitigations} stacks both metrics for
the seven methods on the three shifts.

$\alpha$-Inf drives undercoverage close to zero on prompt
($0.319{\to}0.004$) and well below vanilla on quantization
($0.305{\to}0.080$), but the predictor-state Lipschitz bound is
loose enough that $\widehat\varepsilon$ pushes $\alpha'$ near zero
and prediction sets grow by $15$--$17\%$ on those axes. The
resulting predictor is valid but inefficient. Recal replaces
deflation with a gated, label-budgeted threshold recalibration that fires
only when $\widehat\varepsilon$ exceeds the no-shift noise floor,
and is the best method overall: vanilla undercoverage of
$0.319/0.490/0.305$ drops to $0.060/0.075/0.087$ at inflation cost
within $0.04$ of vanilla (and \emph{below} vanilla on temperature,
$1.270{\to}1.100$). A small deployment-side label set ($k{=}50$)
spent only where the gate fires is therefore enough to absorb most
of the shift damage; a budget ablation over
$k\in\{10,20,30,40,50,60\}$ (Appendix~\ref{app:mitigation-detail},
Figure~\ref{fig:recal-budget}) shows undercoverage saturating within a
few dozen labels.

Uniform sampling from the configuration pool (\emph{Mos-U}) cuts vanilla undercoverage by $0.05$--$0.11$ on every shift with no inflation cost. This is the central cold-start
result: a label-free calibration ensemble already absorbs some of
the shift damage. \emph{Mos-F} additionally over-samples
configurations on which vanilla CP undercovers; the gain over
Mos-U is large on temperature ($0.400{\to}0.200$) and quantization
($0.260{\to}0.120$). \emph{Anc} reaches
Recal-level performance on prompt ($0.069$, inflation $1.044$) but in general does not improve.

Weighted CP fails because it assumes covariate shift with a fixed score function \citep{tibshirani2019covariate}. Under configuration shift, however, the deployed score $V(\cdot,\xi')$ differs from the calibration score $V(\cdot,\xi_0)$ because the predictor changes with the configuration. 

\paragraph{Practical recommendation.}
When even a small deployment-side label set is available or
accumulates over time, Recal is the clear choice: it nearly closes
the undercoverage gap on every shift at negligible inflation cost.
Under a cold-start with no test labels, a uniform calibration-side
configuration ensemble (Mos-U) already absorbs much of the shift
damage at zero inflation cost; if an off-line undercoverage profile
of the configuration grid is available, Mos-F adds another
$0.05$--$0.20$ in undercoverage reduction on shift axes where
fragility concentrates.

\section{Related Work}
\label{sec:related}

%Appendix~\ref{app:related-extended} gives an extended discussion, including connections to uncertainty quantification for LLMs and to shift detection.

\paragraph{CP under distribution shift.}
Most extensions of CP beyond exchangeability model shifts in
the data law. This includes covariate-shift reweighting and adaptive
online updates \citep{tibshirani2019covariate,gibbs2021adaptive},
general non-exchangeable sequences \citep{barber2023}, and robust
prediction sets \citep{cauchois2024robust,xu2025wrcp,yang2026multidist,aolaritei2025lp}.
Recent work has also studied conformalized LLMs under covariate
shift \citep{hu2026cofact}. These methods protect coverage when the
examples or their weights change, typically with a fixed score map. Our setting is complementary: configuration shift keeps $P_Z$ fixed
but changes the score law $P_S^\xi$.
Our aim is \textbf{not} to propose a universal robust-CP method, but to expose configuration shift as a distinct failure mode, connect its coverage loss to score-distribution discrepancies, and provide initial diagnostics and mitigations.

\paragraph{CP for LLMs.}
CP has been adapted to various LLM tasks
\citep{kumar2023mcq,su2024api,quach2024clm,mohri2024factuality}. These papers define valid sets for a chosen LLM scoring pipeline. LLM pipeline choices are typically fixed as part of the experimental protocol, or used for auxiliary sensitivity analysis rather than studied
as a source of score-level shift \citep{kumar2023mcq,su2024api,wang2024conu,wang2025sconu}.
Our findings give a different picture: routine configuration changes
can shift the score distribution enough to break validity.

\paragraph{Pipeline sensitivity and ensembling.}
Separate from CP, prompt-format and inference-pipeline sensitivity are
well documented in LLMs
\citep{sclar2024quantifying,shi-etal-2024-thorough,helcig2026statistically}.
Prompt ensembling methods reduce such sensitivity by aggregating
predictions or probabilities across templates
\citep{jiang2023cape,tonolini2024bayespe}. 
In CP, stability of predictions is not sufficient. Our work motivates calibration ensembles designed and evaluated in terms of CP validity.

\section{Conclusion}
This paper identifies configuration shift as an overlooked source of CP failure in LLM pipelines. We show that even when the underlying examples are unchanged, routine changes to the inference configuration can alter the nonconformity score distribution enough to break the exchangeability on which split CP relies. Across models, datasets, and scores, this failure appears primarily as undercoverage rather than large set inflation, making it easy to miss if configuration changes are treated as harmless implementation details.

The broader lesson is that CP guarantees for LLMs should be understood as guarantees for a complete score-generating pipeline, not for a dataset or model in isolation. As CP moves into retrieval-augmented, agentic, and decision-making LLM systems
\citep{feng2025conformalrag,ren2023knowno,vishwakarma2025prune}, configuration
shift should be treated as a first-class deployment risk alongside data shift.
Future conformalized-LLM work should state the configurations under which
guarantees are claimed, stress-test realistic pipeline changes, and calibrate
over the settings where coverage is expected to hold.

\section{Limitations}
First, our bounds are not finite-sample certificates. The theoretical results
are population statements, while the experiments use finite-sample plug-in
estimates of the relevant discrepancies. We therefore interpret these estimates
as diagnostics of shift severity rather than certified lower bounds on
coverage. Obtaining certified bounds would require replacing the population
discrepancy by a high-probability upper bound; for one-dimensional score-space
KS terms, this could be done using Dvoretzky--Kiefer--Wolfowitz concentration
\citep{dvoretzky1956asymptotic,massart1990tight}. Deriving sharp and practical
finite-sample certificates for these discrepancies is an important but separate
direction.

Second, our mitigations are initial and deliberately simple. They are designed
to test whether the discrepancy quantities identified by our analysis can guide
practical responses to configuration shift, not to provide a universal robust-CP
method for arbitrary deployment shifts. Our goal is to expose configuration
shift as a distinct failure mode, connect its coverage loss to
score-distribution discrepancies, and provide initial diagnostics and
mitigations. A more complete robust calibration framework for
configuration-indexed score maps remains an important direction for future
work.

Third, our experiments focus on controlled shift axes: prompt templates,
sampling temperature and weight quantization. This design makes it
possible to isolate how each source of shift affects CP, but
it does not exhaust all deployment changes, such as simultaneous multi-axis
shifts or changes in retrieval, system prompts, or data pipelines. Extending
the analysis to such compound shifts is a natural next step.

\section{Potential Risks}
Our work does not introduce a new model, dataset, or deployed system. The main
risk is overinterpretation: practitioners may treat our empirical diagnostics
and mitigation results as certified robustness of CP under arbitrary pipeline or
data shifts. This could create unwarranted confidence in conformalized LLM
systems, especially in high-stakes deployments. We mitigate this risk by
emphasizing that CP guarantees are configuration-specific, our bounds are
diagnostic rather than finite-sample certificates, and deployment claims should
be tied to explicit configurations and stress tests.

\section{Acknowledgements}
The authors thank the International Max Planck
Research School for Intelligent Systems (IMPRS-IS) for supporting Yuqicheng Zhu. 
This work was conducted during a research visit to the University
of Oxford, supported by the ELIAS Mobility Programme (GA 101120237) and EU Projects SMARTY (GA 101140087). The work was also partially supported by EU projects euroFMX (GA 101299128) and CEREBRA-AI (GA 101299046).
We thank Prof. Ian Horrocks for hosting the research visit and the Department of Computer Science for its institutional support.

% Custom bibliography entries only
\bibliography{custom}

\appendix
\newpage
\section{Nonconformity Scores for Conformalized LLMs}
\subsection{White-box Nonconformity Scores}
\label{app:whitebox-scores}

The two white-box scores of Section~\ref{sec:cp-llm}, LAC and the logit
margin, are both deterministic functions of the model's predictive
distribution $\hat p_\xi(\cdot\mid x)$ over the label set $\Y$. This appendix
explains how that distribution is formed under an inference configuration
$\xi$, the predictor state it induces, and why the white-box scores are
restricted to the multiple-choice setting.

\paragraph{Logit-based predictive distribution.}
For MCQA the label set is the fixed option set
$\Y=\{A,B,C,D\}$. Under configuration $\xi$ we query the model once and read
the token-level log-probabilities $\ell_\xi(k\mid x)$ it assigns to the four
option symbols, then renormalise them by a softmax over the option set,
\[
  \hat p_\xi(k\mid x)
  =\frac{\exp\ell_\xi(k\mid x)}{\sum_{k'\in\Y}\exp\ell_\xi(k'\mid x)},
  \qquad k\in\Y.
\]
For the temperature-shift experiments, $\ell_\xi(k\mid x)$ is computed by
applying the swept temperature to the final-token logits before this
renormalisation, i.e.\ by using log-probabilities from
$\mathrm{softmax}(\mathrm{logits}/T)$; for the prompt and quantization sweeps
we use $T=1$.
Both white-box scores are functions of
this vector alone: LAC is $1-\hat p_\xi(y\mid x)$ and the logit margin is
$\max_{k\neq y}\{\log\hat p_\xi(k\mid x)-\log\hat p_\xi(y\mid x)\}$, so the
predictive distribution is a sufficient summary of the model output for both.

The construction above requires a fixed, finite label set $\Y$ over which the softmax is defined. OEQA has no such set. The white-box scores are therefore evaluated only on the MCQA datasets (MMLU,
MedMCQA); on the open-ended datasets (NQ, TriviaQA) we use the black-box scores of Appendix~\ref{app:blackbox-scores}, which depend only on sampled text.

\subsection{Black-box Nonconformity Scores}
\label{app:blackbox-scores}

The white-box scores of Section~\ref{sec:cp-llm} (LAC and the logit
margin) are computed from the model's predictive distribution
$\hat p_\xi(\cdot\mid x)$ over candidate labels. This distribution is
unavailable in two situations that are common for LLMs: (i)~the model
is served behind an API that returns only generated text, with no
logits or token probabilities; and (ii)~the task is open-ended, so
there is no fixed finite label set over which a softmax is defined.
The two \emph{black-box} scores used in this paper, \emph{self-consistency}
and \emph{LoFreeCP}, are designed for these settings: they replace the
predictive distribution by an empirical one estimated from sampled
generations, and therefore depend only on the model's text output.

\paragraph{Sampling-based predictive distribution.}
Fix an input $x$ and an inference configuration $\xi$. We draw $M$
independent generations $Y^{(1)},\ldots,Y^{(M)}$ from the model under
$\xi$ (i.e.\ with stochastic decoding) and summarise them by the
empirical frequency of each candidate $y$,
\[
F_y^\xi \;=\; \frac{1}{M}\sum_{j=1}^M \ind\!\bigl(Y^{(j)}=y\bigr).
\]
Generations that cannot be parsed into a valid candidate are discarded,
and $M$ counts only the valid generations. The frequency vector
$F^\xi=(F_y^\xi)_y$ is a Monte-Carlo estimate of the model's predictive
distribution. This same frequency vector is the
predictor state $U$ used by the label-free bounds of
Theorem~\ref{thm:label-free} for black-box scores.
In the reported runs we sample $20$ completions per example by default. 
Decoding uses no top-$k$ truncation ($\texttt{top\_k}=0$), no nucleus
truncation beyond the full distribution ($\texttt{top\_p}=1.0$), and maximum
lengths of $8$ new tokens for MCQA and $24$--$48$ for OEQA. Random seeds are
used only to make cached generations reproducible; samples are not shared or
coupled across configurations.

\paragraph{Candidate set.}
What counts as a candidate $y$, and when two generations are treated as
the same candidate, depends on the task.
\begin{itemize}[leftmargin=*,nosep]
	\item \emph{MCQA.} The candidate set is
	the fixed option set $\Y=\{A,B,C,D\}$. Each generation is parsed to
	the first option letter it contains, and $\ind(Y^{(j)}=y)$ is an
	exact match on that parsed letter, so $F_y^\xi$ is the fraction of
	parsed generations equal to option $y$.
	\item \emph{OEQA.} There is no predefined
	label set, so the candidate set is the collection of \emph{distinct
		answer strings that actually appear} among the $M$ generations. Two
	generations are treated as the same candidate when their normalised
	forms coincide, using the SQuAD-style answer normalisation of
	\citet{su2024api} and \citet{quach2024clm} (lowercasing, removal of
	articles and punctuation, whitespace collapse, and truncation at the
	first sentence delimiter); $\ind(Y^{(j)}=y)$ is then exact match on
	these normalised strings.
\end{itemize}
In both cases $F^\xi$ is a distribution over the realised candidates
only: a candidate that the model never produced has $F_y^\xi=0$ and, in
the open-ended case, is simply absent from the candidate set.

\paragraph{Self-consistency.}
\emph{Self-consistency} \citep{wang2023selfconsistency,su2024api} scores
a candidate by one minus its sampling frequency,
\[
V((x,y),\xi) = 1 - F_y^\xi,
\]
so that candidates appearing more often in the samples are treated as
more conforming. It is the direct sampling analogue of LAC
($1-\hat p_\xi(y\mid x)$): replacing the softmax probability by its
Monte-Carlo estimate $F_y^\xi$ recovers exactly this score.

\paragraph{LoFreeCP.}
\emph{LoFreeCP} \citep{su2024api} augments the frequency signal with two
correction terms,
\[
V((x,y),\xi) = -F_y^\xi
+ \lambda_1 \cdot H_{\mathrm{norm}}^\xi
- \lambda_2 \cdot \mathrm{sim}(y, \hat y^\xi),
\]
with non-negative weights $\lambda_1,\lambda_2$ (we use the defaults
$\lambda_1=\lambda_2=1$). The three terms play complementary roles.
\begin{itemize}[leftmargin=*,nosep]
	\item \emph{Frequency} ($-F_y^\xi$). The same consensus signal as
	self-consistency: more frequently sampled candidates are more
	conforming. Up to the sign convention, self-consistency is exactly
	LoFreeCP with $\lambda_1=\lambda_2=0$, so the two scores are nested
	and self-consistency isolates the contribution of the frequency term.
	\item \emph{Question-level uncertainty} ($+\lambda_1 H_{\mathrm{norm}}^\xi$).
	$H_{\mathrm{norm}}^\xi = -\sum_y F_y^\xi\log F_y^\xi \,/\, \log M$ is
	the Shannon entropy of the empirical frequency vector, normalised by
	$\log M$ so that $H_{\mathrm{norm}}^\xi\in[0,1]$ ($\log M$ is the
	largest possible entropy, attained when all $M$ generations are
	distinct). 
	\item \emph{Semantic agreement} ($-\lambda_2\,\mathrm{sim}(y,\hat y^\xi)$).
	$\hat y^\xi = \arg\max_y F_y^\xi$ is the most frequently sampled
	(top-1) candidate, and $\mathrm{sim}(y,\hat y^\xi)$ is the cosine
	similarity between sentence embeddings of $y$ and $\hat y^\xi$ (we
	embed answer strings with \texttt{all-MiniLM-L6-v2}); the top-1
	candidate itself is assigned $\mathrm{sim}(\hat y^\xi,\hat y^\xi)=0$ by
	convention. 
\end{itemize}

\paragraph{Calibration and set construction.}
For multiple-choice QA the score is a fixed $|\Y|$-way vector per
question, and calibration and set construction proceed exactly as in
Section~\ref{sec:cp-background}: the calibration score is $V$ evaluated
at the true option, and the prediction set is
$\{y\in\Y: V((x,y),\xi)\le\hat\lambda\}$. For open-ended QA the
candidate set is the per-question sample pool, and we follow the
LoFreeCP protocol \citep{su2024api}: the calibration score of a question
is the smallest $V$ over pool entries that match the gold answer (and
$+\infty$ if the model never sampled a correct answer); the prediction
set keeps every pool entry with $V\le\hat\lambda$; and a test question
is counted as covered when at least one retained entry matches the gold
answer.

\section{Details of Datasets and Models}
\label{app:artifact-details}

\subsection{Datasets and Data Statistics}
\label{app:data-details}

We use four public English question-answering benchmarks, citing the
dataset creators in Section~\ref{sec:exp-setting}. MMLU
\citep{hendrycks2021mmlu} and MedMCQA \citep{pal2022medmcqa} are used
as multiple-choice QA datasets with a fixed four-option label space. Natural
Questions \citep{kwiatkowski2019nq} and TriviaQA
\citep{joshi2017triviaqa} are used as open-ended QA datasets with
string-valued answers and aliases. We use these artifacts only for research
evaluation of uncertainty sets, consistent with their standard benchmark use,
and do not collect new human data or create new labels.

Table~\ref{tab:dataset-artifacts} summarizes the dataset artifacts used in
our experiments. The public artifact repositories provide the dataset
documentation and licenses; at the time of use, MMLU and MedMCQA were listed
under MIT-style licenses, Natural Questions under CC-BY-SA-3.0, and the
original TriviaQA repository under Apache-2.0. Because these are public QA
benchmarks, they may contain named entities, web-derived text, medical exam
vignettes, or other source-inherited sensitive or offensive content. We do not
use the datasets for profiling individuals or making real-world decisions.

\begin{table*}[t]
    \centering
    \small
    \begin{tabular}{p{0.13\textwidth}p{0.22\textwidth}p{0.18\textwidth}p{0.34\textwidth}}
        \toprule
        Dataset & Artifact description & Documentation/license & Statistics used in this paper \\
        \midrule
        MMLU & Broad four-option MCQA benchmark over $57$ subjects. & Dataset paper/card; MIT-style license. & Stratified by subject with $30$ calibration and $30$ test examples per subject in each trial. \\
        MedMCQA & Medical-entrance-exam MCQA benchmark covering clinical and biomedical subjects. & Dataset paper/card; MIT-style license. & Random splits with $800$ calibration and $800$ test examples in each trial. \\
        NQ & Open-domain QA benchmark built from real search questions and Wikipedia-based answers. & Dataset paper/card; CC-BY-SA-3.0. & Random splits with $800$ calibration and $800$ test examples in each trial; OEQA evaluation is conditional on gold-in-candidate retention. \\
        TriviaQA & Factoid open-ended QA benchmark with answer aliases and web/Wikipedia evidence in the original artifact. & Dataset paper/card; Apache-2.0 in the original repository. & Random splits with $800$ calibration and $800$ test examples in each trial; OEQA evaluation is conditional on gold-in-candidate retention. \\
        \bottomrule
    \end{tabular}
    \caption{\textbf{Dataset artifacts used in the empirical study.}
    All experiments use public benchmark data for research evaluation only.
    Additional evaluation denominators, base accuracies, and gold-in-candidate
    retention rates are reported in Table~\ref{tab:evaluation-diagnostics}.}
    \label{tab:dataset-artifacts}
\end{table*}

\subsection{Models and Model Artifacts}
\label{app:model-details}

We evaluate nine open-weight instruction models from three families:
Llama-3.2-Instruct 1B/3B and Meta-Llama-3-8B-Instruct
\citep{meta2024llama32,grattafiori2024llama3}; Gemma-3-it 1B/4B/12B
\citep{gemmateam2025gemma3}; and Qwen3.5 0.8B/4B/9B
\citep{qwen3.5}. The model cards and technical reports document the intended
uses, training summaries, and safety or use-policy restrictions for these
artifacts. We use the models only for automated inference on benchmark
questions; we do not fine-tune, redistribute, or deploy them.

For the quantization-shift experiments, we additionally use GGUF quantized
checkpoints for Qwen3.5-9B, Meta-Llama-3-8B-Instruct, and Gemma-3-12B-it
\citep{bartowski_qwen35_9b_gguf,quantfactory_llama3_8b_instruct_gguf,
unsloth_gemma3_12b_it_gguf}. These checkpoints instantiate the deployment
configuration axis studied in Section~\ref{sec:exp-setting}; their repository
cards document the available GGUF precisions. We use four precisions
(\texttt{Q8}, \texttt{Q6}, \texttt{Q4}, and \texttt{Q2}) as evaluation
conditions and follow the upstream model terms: Meta Llama community licenses
for Llama models, Gemma Terms of Use for Gemma models, and Apache-2.0 for
Qwen models.

\section{Prompt Templates}
\label{app:prompt-templates}

This appendix records the prompt-template suites used in the prompt-template
shift sweeps of Section~\ref{sec:exp-setting}. Two template suites are used:
a \textbf{14-template MCQA suite}, shared verbatim by MMLU and MedMCQA, and a
\textbf{7-template OEQA suite}, shared verbatim by Natural Questions and
TriviaQA. Within each suite, every template is one of three categories:
\begin{enumerate}[leftmargin=*,nosep]
  \item \emph{Surface format} --- vary how the answer options (MCQA) or
        question/answer prefixes (OEQA) are typeset; instructions and
        demonstration content are held fixed.
  \item \emph{Instruction syntax and wording} --- vary the verbosity, persona,
        and phrasing of the task instructions; demonstrations and surface
        format are held fixed.
  \item \emph{Content ablation} --- remove exactly one scaffolding component
        (subject preamble, chain-of-thought directive, or few-shot demos)
        from the anchor template.
\end{enumerate}
Baseline prompts are displayed below; non-anchor templates are specified as
deltas from the baseline. The canonical templates that the model receives are
stored in the per-dataset YAML configuration files under the project's
\texttt{experiments/} directory.

\subsection{Template schema}
\label{app:templates-schema}

Each template is a Python format string with task-specific placeholders;
placeholders that a particular template does not use are silently dropped at
format time. Concrete prompts shown below contain the raw placeholders in
braces (e.g.\ \texttt{\{question\}}) so the template structure is visible.

\paragraph{MCQA placeholders.}
\texttt{\{subject\}} is a pretty-printed subject name (e.g.\ ``abstract
algebra''); \texttt{\{question\}} is the test question; \texttt{\{choices\}}
is the four options rendered in the template's \texttt{choice\_style}; and
\texttt{\{demo\_question\}}, \texttt{\{demo\_choices\}},
\texttt{\{demo\_answer\}} carry the one-shot demonstration (a held-out
dev-split example with its correct option label). A \texttt{choice\_style} is
a (\emph{delimiter}, \emph{label set}) pair drawn from delimiters
\{\texttt{paren} ``(A) text'', \texttt{dot} ``A. text'',
\texttt{dash} ``A) text''\} and label sets
\{\texttt{upper} (A--D), \texttt{lower} (a--d), \texttt{numeric} (1--4)\}.

\paragraph{OEQA placeholders.}
\texttt{\{question\}} is the test question; \texttt{\{demos\}} is the
few-shot demonstration block; and \texttt{\{q\_prefix\}}, \texttt{\{a\_prefix\}}
are the per-template question and answer prefix tokens (default
\texttt{``Q: ''} / \texttt{``A: ''}). \texttt{\{demos\}} renders each
demonstration as the two-line block ``\texttt{\{q\_prefix\}\{q\}}\textbackslash n
\texttt{\{a\_prefix\}\{a\}}'' and joins successive demonstrations with a
blank line; the same prefix tokens are substituted into demonstrations and
the test question so the prompt is internally consistent. The five
demonstration QA pairs (Table~\ref{tab:oeqa-demos}) are taken verbatim from
the TriviaQA prompt of~\citet{su2024api} (Appendix~C.1) and are shared by
both OEQA datasets.

\subsection{MCQA template suite (MMLU, MedMCQA)}
\label{app:mcqa-templates}

The MCQA suite contains 14 templates: 9 surface-format variants (a full
$3\times 3$ delimiter $\times$ label-set grid), 3 instruction-syntax
variants (counting the anchor), and 3 content-ablation variants. All
templates are shared verbatim between MMLU and MedMCQA; only the
dataset-specific values of \texttt{\{subject\}}, \texttt{\{question\}}, and
the demonstration placeholders differ.

\paragraph{Anchor: \textsc{Baseline}.}
The anchor is the paper-faithful one-shot prompt of~\citet{kumar2023mcq}:
a subject preamble, a fully rendered one-shot example with its correct
answer, a chain-of-thought (CoT) directive, and the test question terminated
with the cue ``The correct answer is option:''. Options are rendered in
\texttt{paren\_upper} style (``(A)~text''). The template body is shown in
Figure~\ref{fig:mcqa-baseline}.

\begin{figure}[!t]
\footnotesize
\begin{verbatim}
This is a question from {subject}.

{demo_question}
{demo_choices}
The correct answer is option: {demo_answer}.

Reason step-by-step and answer the following
question.
{question}
{choices}
The correct answer is option:
\end{verbatim}
\caption{MCQA \textsc{Baseline} template. \texttt{\{choices\}} and
\texttt{\{demo\_choices\}} are rendered in \texttt{paren\_upper}
(``(A)~text \ldots (D)~text''); the demonstration is drawn from
dev-split index $0$.}
\label{fig:mcqa-baseline}
\end{figure}

\paragraph{Category 1 --- Surface format (9 variants).}
The 9 surface-format variants reuse the \textsc{Baseline} prompt body
verbatim and only change the \texttt{choice\_style}: the full $3\times 3$
grid of delimiters \{\texttt{paren}, \texttt{dot}, \texttt{dash}\} crossed
with label sets \{\texttt{upper}, \texttt{lower}, \texttt{numeric}\}. The
combination \texttt{paren\_upper} coincides with the anchor, so there are
$8$ \emph{non-anchor} surface-format templates; we retain
\texttt{paren\_upper} in the count of $9$ so the grid is a complete
factorial. Each style is summarised in Table~\ref{tab:mcqa-styles}.

\begin{table}[!t]
\footnotesize
\centering
\begin{tabular}{lll}
\toprule
\textbf{Style} & \textbf{Format} & \textbf{Example} \\
\midrule
\texttt{paren\_upper} (anchor) & \texttt{(\{l\}) \{t\}} & \texttt{(A) text} \\
\texttt{paren\_lower}          & \texttt{(\{l\}) \{t\}} & \texttt{(a) text} \\
\texttt{paren\_numeric}        & \texttt{(\{l\}) \{t\}} & \texttt{(1) text} \\
\texttt{dot\_upper}            & \texttt{\{l\}. \{t\}}  & \texttt{A. text}  \\
\texttt{dot\_lower}            & \texttt{\{l\}. \{t\}}  & \texttt{a. text}  \\
\texttt{dot\_numeric}          & \texttt{\{l\}. \{t\}}  & \texttt{1. text}  \\
\texttt{dash\_upper}           & \texttt{\{l\}) \{t\}}  & \texttt{A) text}  \\
\texttt{dash\_lower}           & \texttt{\{l\}) \{t\}}  & \texttt{a) text}  \\
\texttt{dash\_numeric}         & \texttt{\{l\}) \{t\}}  & \texttt{1) text}  \\
\bottomrule
\end{tabular}
\caption{Surface-format \texttt{choice\_style}s used in the MCQA suite.
Each row defines a (delimiter, label-set) pair; \texttt{\{l\}} is one of
four label tokens (e.g.\ ``A'', ``a'', or ``1'') and \texttt{\{t\}} is the
option text. The surrounding prompt body is identical for all $9$ variants
(Figure~\ref{fig:mcqa-baseline}).}
\label{tab:mcqa-styles}
\end{table}

\paragraph{Category 2 --- Instruction syntax / wording (3 variants).}
We vary instruction wording while keeping the demonstration content and
answer-format expectations fixed. The \textsc{Baseline} body sits at the
middle of this axis. The \texttt{formal} variant uses \texttt{dot\_upper}
options and an expert-role prompt with an
\texttt{Example:/Question:/Options:/Answer:} demonstration followed by the
test schema \texttt{Question:/Options:/Answers:}. The \texttt{minimal}
variant uses \texttt{dash\_upper} options and consists only of
\texttt{\{question\}}, \texttt{\{choices\}}, and the answer cue
\texttt{The correct answer is option:}.

\paragraph{Category 3 --- Content ablation (3 variants).}
The content-ablation templates each remove exactly one scaffolding component
from \textsc{Baseline} while leaving everything else (option style,
demonstration content, answer cue) unchanged: \texttt{no\_subject} drops the
``This is a question from \{subject\}.'' preamble; \texttt{no\_cot} replaces
``Reason step-by-step and answer the following question.'' with the shorter
``Answer the following question.''; \texttt{zero\_shot} drops the one-shot
demonstration block. All three retain \texttt{paren\_upper} options and use
dev-split index $0$ where applicable.

\subsection{OEQA template suite (NQ, TriviaQA)}
\label{app:oeqa-templates}

The OEQA suite contains $7$ templates: $3$ surface-format variants (counting
the anchor), $3$ instruction-syntax variants (counting the anchor), and
$2$ content-ablation variants. All templates are shared verbatim between
Natural Questions and TriviaQA. Unlike the MCQA suite, the OEQA anchor is
already an elaborate prompt; it follows the prompt shape of the LoFreeCP
paper~\citep{su2024api}, which is the published reference for the LoFreeCP
score on open-ended QA. Cross-task comparisons between the MCQA and OEQA
anchors should therefore be read with this asymmetry in mind.

\paragraph{Anchor: \textsc{Baseline}.}
The anchor is a 5-shot prompt with a task-description preamble and a
chain-of-thought directive. Question and answer lines carry the prefixes
\texttt{``Q: ''} and \texttt{``A: ''}. The same prefix tokens are used to
render each demonstration (so each demo expands to a \texttt{Q:\ \ldots}
line followed by an \texttt{A:\ \ldots} line, with successive demos
separated by a blank line). The five demonstration QA pairs are listed in
Table~\ref{tab:oeqa-demos}. The template body is shown in
Figure~\ref{fig:oeqa-baseline}.

\begin{figure}[!t]
\footnotesize
\begin{verbatim}
Please engage in the question-answering task.
You should generate a short factual answer to each
question. Examples are provided.

{demos}

Reason step-by-step and answer the following
question with a short factual answer.

{q_prefix}{question}
{a_prefix}
\end{verbatim}
\caption{OEQA \textsc{Baseline} template.
\texttt{\{demos\}} expands to the five
demonstration QA pairs of Table~\ref{tab:oeqa-demos}, each rendered
as ``\texttt{Q: $\langle q\rangle$}\textbackslash n
\texttt{A: $\langle a\rangle$}'' and separated by a blank line.
\texttt{\{q\_prefix\}} and \texttt{\{a\_prefix\}} are
\texttt{``Q: ''} and \texttt{``A: ''} for the anchor.}
\label{fig:oeqa-baseline}
\end{figure}

\begin{table}[!t]
\footnotesize
\centering
\begin{tabular}{p{0.62\linewidth} l}
\toprule
\textbf{Question} & \textbf{Answer} \\
\midrule
Which American-born Sinclair won the Nobel Prize for Literature in 1930? & Sinclair Lewis \\
Where in England was Dame Judi Dench born?                              & York \\
From which country did Angola achieve independence in 1975?             & Portugal \\
Which city does David Soul come from?                                   & Chicago \\
Who won Super Bowl XX?                                                  & Chicago Bears \\
\bottomrule
\end{tabular}
\caption{Five OEQA demonstrations, taken verbatim from the TriviaQA prompt
of~\citet{su2024api} (Appendix~C.1). Used by every OEQA template except
\texttt{zero\_shot} (which omits \texttt{\{demos\}}) and \texttt{minimal}
(which omits demos and instructions). The same five demos are reused
across NQ and TriviaQA so that comparisons across templates isolate
template effects rather than demo-content effects.}
\label{tab:oeqa-demos}
\end{table}

\paragraph{Category 1 --- Surface format (3 variants).}
With no fixed label space, the surface-format axis varies the typesetting of
the question/answer \emph{prefix tokens} rather than option labels. The
three variants reuse the anchor prompt body and differ only in
(\texttt{q\_prefix}, \texttt{a\_prefix}): \textsc{Baseline} uses
(\texttt{``Q: ''}, \texttt{``A: ''}); \texttt{lowercase} uses
(\texttt{``q: ''}, \texttt{``a: ''}); and \texttt{full\_prefix} uses
(\texttt{``question: ''}, \texttt{``answer: ''}). The substitution is
applied uniformly to every demonstration line and to the final test-question
line, so the prompt remains internally consistent. The grid is structurally
narrower than its MCQA counterpart ($3$ vs $9$ variants) because OEQA has no
delimiter $\times$ label-set product to sweep.

\paragraph{Category 2 --- Instruction syntax / wording (3 variants).}
As in the MCQA suite, the \textsc{Baseline} body sits at the middle of this
axis. The \texttt{formal} variant keeps the
(\texttt{``Q: ''}, \texttt{``A: ''}) prefixes and the same five demos, but
adds expert-role framing, numbered output guidelines that forbid hedging and
``The answer is\ldots'' preambles, and a
\texttt{``Let's think step by step.''} directive. The \texttt{minimal}
variant is the single line \texttt{\{question\}}, with no demonstrations,
instructions, or Q/A prefixes.

\paragraph{Category 3 --- Content ablation (2 variants).}
\texttt{no\_cot} drops the chain-of-thought directive from the
\textsc{Baseline} body (``Reason step-by-step and answer the following
question with a short factual answer.''~$\rightarrow$~``Answer the following
question with a short factual answer.''); \texttt{zero\_shot} additionally
drops the \texttt{\{demos\}} block. Both retain (\texttt{``Q: ''},
\texttt{``A: ''}) prefixes. There is no \texttt{no\_subject} analogue
because the OEQA anchor has no subject scaffolding to remove.

\section{More Empirical Results for the Effect of Configuration Shift}
\label{app:extra-results}

\subsection{Evaluation Denominators and Diagnostics}
\label{app:evaluation-diagnostics}

The main text aggregates undercoverage and set-size inflation over the audit
grid. Table~\ref{tab:evaluation-diagnostics} records the corresponding
evaluation denominators and auxiliary diagnostics: the number of
configuration summaries, the number of evaluated cell--trial pairs, top-1
base accuracy, unconditional set-size inflation, and, for OEQA,
gold-in-candidate retention.

For OEQA, prediction sets are restricted to the sampled answer pool. If no
sampled candidate matches the gold answer, no conformal threshold can cover
that question. Let $r_q^\xi$ be the minimum score among sampled candidates
that match the gold answer under configuration $\xi$, with
$r_q^\xi=\infty$ if no such candidate exists. For each
calibration--test configuration pair $(\xi_i,\xi_j)$, we evaluate OEQA on
the answerable intersection
\[
\mathcal S_{ij}
=\{q:r_q^{\xi_i}<\infty \ \wedge\ r_q^{\xi_j}<\infty\}.
\]
Calibration and test indices are drawn first and then filtered to
$\mathcal S_{ij}$. Thus OEQA results should be read as \textbf{conditional on this
cell-specific answerable population}. 

This conditioning is needed to separate conformal miscalibration from the
candidate-generation bottleneck. In sampling-based OEQA, driving the
gold-in-candidate rate close to one can require a much larger sampling budget:
\citet{su2024api} note that even 95\%-confidence, 1\%-error probability
estimation would require $9{,}604$ samples. Such budgets are impractical for
our multi-model, multi-dataset configuration audit, so we use a fixed
20-sample budget. At this budget, many failures are simply cases where the
gold answer never appears in the candidate pool; counting them as CP failures
would conflate answer generation with robustness to configuration shift.
The answerable-intersection evaluation therefore isolates the question studied
in this paper: whether CP remains calibrated when the score-generating
configuration changes.
For MCQA, the corresponding retention
rate is $1$ by construction because the fixed four-option label space
contains the gold label.

We also report the unconditional set-size inflation
\[
\mathrm{Infl}_{\mathrm{all}}(\mathcal A)
=
\frac{1}{|\mathcal A|}
\sum_{a=(\xi_0,\xi',r)\in\mathcal A}
\frac{\widehat L_a}{\bar L_{\mathrm{iid}}(\xi')},
\]
computed over all shifted cell--trial pairs, rather than only runs with
$\widehat C\ge 1-\alpha$ as in Section~\ref{sec:empirical}. This separates
the main efficiency conclusion from the valid-run conditioning used by
$\mathrm{Infl}$.

\begin{table*}[t]
	\centering
	\scriptsize
	\begin{tabular}{llrrrrr}
		\toprule
		Axis & Task & Configs & Cell--trials & Gold cand. & Base acc. & $\mathrm{Infl}_{\mathrm{all}}$ \\
		\midrule
		Prompt & MCQA & 72 & $141{,}120/131{,}040$ & $1.00$ & $0.49$ & $1.00$ \\
		Prompt & OEQA & 36 & $17{,}640/15{,}120$ & $0.33$ & $0.60$ & $0.84$ \\
		Temperature & MCQA & 72 & $18{,}000/14{,}400$ & $1.00$ & $0.51$ & $1.01$ \\
		Temperature & OEQA & 36 & $9{,}000/7{,}200$ & $0.31$ & $0.64$ & $0.76$ \\
		Quantization & MCQA & 24 & $11{,}520/8{,}640$ & $1.00$ & $0.65$ & $1.00$ \\
		Quantization & OEQA & 12 & $5{,}760/4{,}320$ & $0.26$ & $0.50$ & $0.95$ \\
		\bottomrule
	\end{tabular}
	\caption{\textbf{Evaluation diagnostics at $1-\alpha=0.9$.}
		Cell--trials are reported as all/shifted cell--trial pairs after
		score filtering. Gold cand. is the gold-in-candidate retention rate;
		for OEQA this is the answerable-intersection retention
		$|\mathcal S_{ij}|/|\mathcal Q|$, while for MCQA it is $1$ by
		construction. Base acc. is the top-1 accuracy of the underlying
		pipeline before conformal filtering, averaged with the same
		configuration and score weighting as the audit grid.}
	\label{tab:evaluation-diagnostics}
\end{table*}

\begin{figure}[t]
	\centering
	\includegraphics[width=\linewidth]{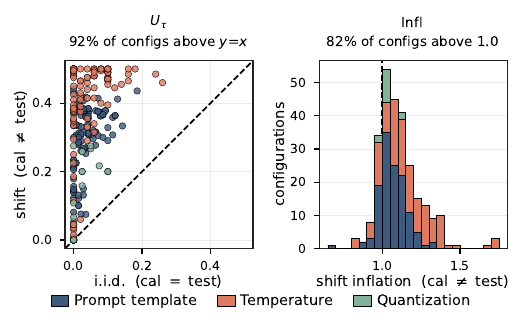}
	\caption{\textbf{Per-configuration coverage damage under
		configuration shift, pooled across task types.} Each summary is for one
		$(\text{dataset},\text{model},\text{score})$ configuration and one shift
		source, pooling over the audit grid and $10$ trials at $1-\alpha=0.9$;
		MCQA and open-ended QA configurations are pooled and colour denotes the
		shift source. \emph{Left (under-coverage):} a configuration's
		in-distribution value (cal\,$=$\,test, $x$) against its shifted value
		(cal\,$\neq$\,test, $y$); the dashed line is $y=x$, the configuration's
		own i.i.d.\ baseline, and a point above it means shift worsens
		under-coverage. \emph{Right (set-size inflation):} the distribution of
		shifted inflation across configurations; inflation is normalized per
		configuration, so each configuration's own i.i.d.\ baseline is exactly
		$1.0$ (dashed line). The percentage above each panel is the share of
		configurations in the worse region (above $y=x$ on the left, above
		$1.0$ on the right).}
	\label{fig:diag-shift}
\end{figure}

\subsection{Per-Configuration Effects of Configuration Shift}
Section~\ref{sec:empirical} reports the configuration-shift effect as an
aggregate over the audit grid; Figure~\ref{fig:diag-shift} shows the same
effect at the level of individual configuration summaries, pooling MCQA and
OEQA. In the undercoverage panel, each point compares a
configuration's in-distribution value with its shifted value for one shift
source, and the dashed $y=x$ line is each point's own i.i.d.\ baseline, so
points above the diagonal are cases where shift worsens validity. In the
inflation panel, each configuration's i.i.d.\ baseline is exactly $1.0$ by
construction, so we instead show the distribution of shifted inflation, with
the dashed line at $1.0$.

The undercoverage panel supports the main validity claim at a finer
granularity: more than nine in ten configuration--axis summaries move into
the worse half-plane under shift. The aggregate effect is therefore not
carried by a few fragile cells, but appears broadly across datasets, models,
scores, task types, and shift axes.

The set-size panel gives the complementary efficiency picture. Most
configurations (about $82\%$) inflate under shift relative to their own
i.i.d.\ baseline, but the distribution stays tightly concentrated near $1.0$
(median $1.08$), with a minority ($14\%$) contracting below their baseline.
This matches the main-body conclusion: configuration shift primarily
manifests as validity loss, while set sizes among valid runs remain close to
their i.i.d.\ scale.

\begin{figure}[t]
	\centering
	\includegraphics[width=\linewidth]{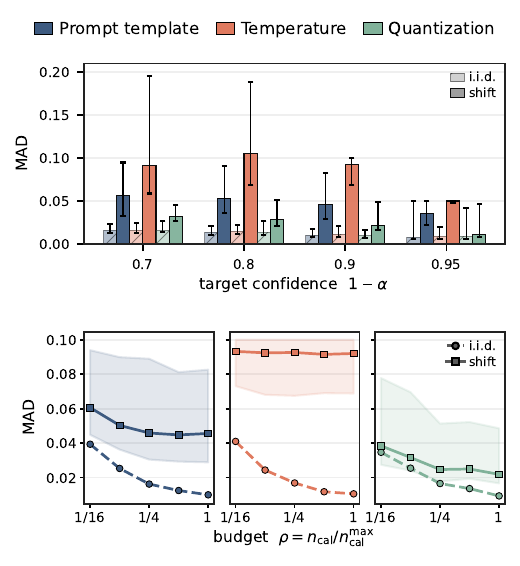}
	\caption{\textbf{Configuration shift increases coverage error across
		targets and calibration budgets.} Median absolute deviation from the
		nominal coverage target, computed from the same audit grid as
		Figure~\ref{fig:results}. Lower is better. \emph{Top:}
		target-confidence sweep at the full calibration budget.
		\emph{Bottom:} calibration-budget sweep at fixed $1-\alpha=0.9$,
		faceted by shift source. Hatched bars and dashed lines denote
		in-distribution cells; solid bars and lines denote shifted cells.
		Because this metric is symmetric around the target, it should be read
		together with the undercoverage results in
		Figures~\ref{fig:results} and~\ref{fig:diag-shift}.}
	\label{fig:mad-combined}
\end{figure}

\subsection{Coverage-Error Magnitude Under Configuration Shift}
The main text uses undercoverage rate as the primary validity metric,
because it directly counts runs that fail to meet the target. As a
complementary severity view, Figure~\ref{fig:mad-combined} reports the
median absolute deviation of empirical coverage from the nominal target.
For a fixed $(\text{dataset},\text{model},\text{score})$ configuration
$g$ and a collection of evaluated cell-trial pairs $\mathcal A$, define
\[
\mathrm{MAD}_{1-\alpha}(g;\mathcal A)
=
\operatorname*{median}_{a\in\mathcal A}
\left|\widehat C_a-(1-\alpha)\right|,
\]
where $\widehat C_a$ is the empirical coverage of cell-trial pair $a$.
We compute this quantity separately for the in-distribution subset
$\mathcal A_{\mathrm{iid}}$ and the shifted subset
$\mathcal A_{\mathrm{shift}}$, using the same partition as
Section~\ref{sec:empirical}.

This metric is intentionally symmetric around the target, so it does not
by itself identify whether errors are conservative or anti-conservative;
the direction is given by the undercoverage analyses above and in
Section~\ref{sec:empirical}. Across the target-confidence sweep, shifted
cells have consistently larger coverage error than their in-distribution
baselines, with temperature again showing the largest deviation. Thus the
main validity result is not only a threshold-crossing effect: under shift,
coverage also moves farther away from the target in magnitude.

The calibration-budget sweep gives the same message from the finite-sample
side. In-distribution MAD decreases as calibration noise is reduced, while
shifted MAD remains separated from the i.i.d.\ baseline, especially under
temperature shift. This supports the main-body interpretation that more
calibration data removes sampling noise but does not remove the systematic
calibration--deployment mismatch induced by configuration shift.

\begin{figure}[t]
	\centering
	\includegraphics[width=\linewidth]{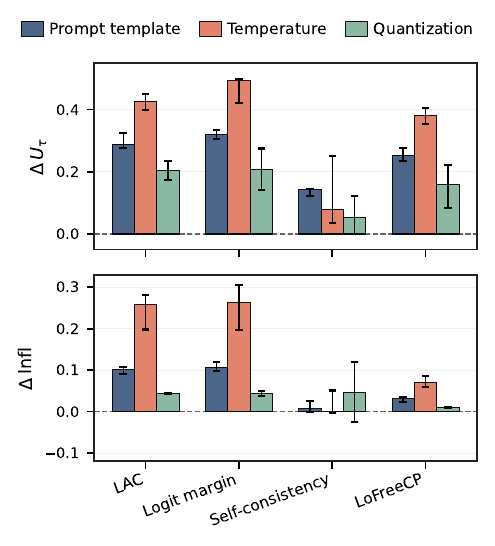}
	\caption{\textbf{Per-score fragility breakdown at $1-\alpha=0.9$.}
		Per-configuration gaps $\Delta U_\tau$ and $\Delta\mathrm{Infl}$
		(defined in Section~\ref{app:per-score})
		are aggregated across configurations per nonconformity score, pooling
		MCQA and OEQA where both apply. White-box scores
		are MCQA-only (logits unavailable on the open-ended pipeline); the
		black-box scores (self-consistency, LoFreeCP)
		pool MCQA and OEQA.
	}
	\label{fig:per-score}
\end{figure}

\subsection{Per-Score Fragility}
\label{app:per-score}
For each nonconformity score we report two per-configuration gaps, the change
in each metric between the shifted and in-distribution evaluation subsets
$\mathcal A_{\mathrm{shift}}$ and $\mathcal A_{\mathrm{iid}}$ (the same
partition as Section~\ref{sec:empirical}):
\[
\Delta U_\tau = U_\tau(\mathcal A_{\mathrm{shift}}) - U_\tau(\mathcal A_{\mathrm{iid}}),
\]
\[
\Delta\mathrm{Infl} = \mathrm{Infl}(\mathcal A_{\mathrm{shift}}) - \mathrm{Infl}(\mathcal A_{\mathrm{iid}}),
\]
the shift-induced change in undercoverage rate $U_\tau$ and set-size inflation
$\mathrm{Infl}$. A larger $\Delta U_\tau$ means shift drives more runs below
the coverage target; $\Delta\mathrm{Infl}$ is signed, with prediction sets
growing when it is positive and shrinking when negative.

Disaggregating by nonconformity score at $1-\alpha=0.9$
(Figure~\ref{fig:per-score}) shows that score choice changes the
degree of fragility, but not the basic failure mode. The two white-box
scores, LAC and logit margin, behave very similarly across all three
shift axes, with temperature producing the largest validity loss for
both. This suggests that, for logits-based CP, configuration sensitivity
is driven more by how the pipeline shifts the predictive distribution
than by the particular white-box score formula.

The black-box scores are more heterogeneous. Self-consistency is the
most stable score in Figure~\ref{fig:per-score}, with small
$\Delta U_\tau$ and little set-size change, whereas LoFreeCP is
substantially more fragile, especially under temperature shift.
Efficiency effects are weaker than validity effects: large
$\Delta\mathrm{Infl}$ appears mainly for white-box scores under
temperature, while black-box set sizes remain close to their
per-configuration i.i.d.\ baselines. Thus the per-score breakdown
reinforces the main conclusion: configuration shift primarily causes
undercoverage, and robustness depends on the score, but no score removes
the need to audit the deployed configuration.

\subsection{Per-configuration Analysis}
\label{app:heatmaps}
Figures~\ref{fig:heat-prompt} and~\ref{fig:heat-ordered} resolve the
per-configuration damage of Figure~\ref{fig:diag-shift} into the full
(calibration, deployment) plane, pooling over model, score, and trial. The
diagonal (cal $=$ test) under-coverage sits at the i.i.d.\ floor on every axis
and both tasks ($0.03$--$0.05$); off the diagonal the mean jumps at once, to
$0.28$/$0.26$ for prompt, $0.42$/$0.30$ for temperature, and $0.19$/$0.14$ for
quantization (MCQA/OEQA).  The aggregate
coverage loss is not driven by a small number of outliers; it is a systematic
effect of calibrating and deploying under different score-generating
configurations.

The heatmaps also show that configuration shift is structured rather than
uniform. For ordered axes, the damage is asymmetric: calibrating under one
configuration and deploying under another can be much worse than reversing the
pair. Quantization gives the clearest example. In both MCQA and open-ended QA,
deployment to \texttt{Q2} is fragile across many calibration precisions
(e.g., \texttt{Q8}/\texttt{Q6}/\texttt{Q4}$\to$\texttt{Q2}), while the reverse
direction, calibrating on \texttt{Q2} and deploying to higher precisions, has
near-zero undercoverage but can produce larger sets. Thus reversing a mismatch
does not remove configuration shift; it often changes the failure mode from
validity loss to conservativeness. Temperature shows a similar directional
structure, with the more harmful direction depending on the task. These paired
reversals show that coverage loss is not explained by configuration distance
alone. The harmful direction is the one that calibrates on configurations with
smaller nonconformity scores and deploys on configurations with larger ones,
supporting the score-distribution view of Section~\ref{sec:bounds}.

Prompt shift shows a different but equally structured pattern. In MCQA, the
dominant source of fragility is not the option delimiter but the answer-label
token itself. Within the surface-format grid, changing only the delimiter while
keeping the label set fixed is comparatively mild, whereas changing the label
set (\texttt{A/B/C}, \texttt{a/b/c}, or \texttt{1/2/3}) produces much larger
undercoverage. The direction is also asymmetric: calibrating on numeric labels
and deploying on letter labels is much more harmful than the reverse. Thus even
a superficial-looking change in the answer interface can shift the
nonconformity distribution enough to invalidate calibration.
OEQA shows a complementary pattern. Because there is no fixed answer
label space, fragility is driven less by prefix typography and more by changes
to the prompt body. Deployment on \texttt{minimal} or \texttt{zero\_shot}
templates is especially fragile, while isolated prefix or chain-of-thought
wording changes are milder.

\section{Proofs and Auxiliary Results for Section~\ref{sec:bounds}}
\label{app:proofs}

This appendix proves the three coverage lower bounds of
Section~\ref{sec:bounds} (i.e., Proposition~\ref{prop:score-tv},
Theorem~\ref{thm:score-ks}, and Theorem~\ref{thm:label-free}) and
collects their matching upper bounds. All four results share one
mechanism: 
after calibration the acceptance region
$E:=(-\infty,\hat\lambda]$ is random but fixed conditional on the
calibration sample; test coverage is $C=\E[P_S^{\xi'}(E)]$, while the
calibration guarantee gives $\E[P_S^{\xi_0}(E)]\ge 1-\alpha$. Thus the
coverage loss is controlled by how much the two laws disagree on $E$.
We isolate this mechanism once
(Appendices~\ref{app:integral-identity}--\ref{app:score-bounds}) and
then specialize it.

\paragraph{Provenance.}
The mechanism we isolate (i.e. bounding coverage loss by a distributional
discrepancy evaluated on the conditionally fixed acceptance
region) is the device underlying robust-validation and
non-exchangeability analyses of CP
\citep{cauchois2024robust,barber2023}, and the calibration-side
envelope it rests on is the standard split-conformal guarantee
\citep{vovk2005,lei2018}. What is specific to this paper is the
\emph{object} to which the template is applied, namely, the score pushforward
under configuration shift, where $P_Z$ is fixed and the score map
moves, together with the half-line (KS) tightening of
Theorem~\ref{thm:score-ks} and the label-free kernel decomposition of
Theorem~\ref{thm:label-free}; the $W_1$ form additionally invokes
Kantorovich--Rubinstein duality \citep{villani2009optimal}.

\subsection{Calibration-side integral identity}
\label{app:integral-identity}

\begin{lemma}[Integral identity]\label{lem:integral-identity}
    For any probability law $\nu$ on $\R$ and any $T_\nu\sim\nu$
    independent of the calibration scores,
    $\Prob(T_\nu\in E)=\E[\nu(E)]$.
\end{lemma}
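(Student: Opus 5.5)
The identity is a conditioning argument. We condition on the calibration sample, under which the acceptance region $E=(-\infty,\hat\lambda]$ is fixed, and then integrate out. Write $\mathbf S=(S_1,\ldots,S_n)$ for the calibration scores and recall that $\hat\lambda=S_{(k_\alpha)}$ is a measurable function of $\mathbf S$ taking values in $\R\cup\{+\infty\}$. Hence
\[
\{(\mathbf s,t): t\le\hat\lambda(\mathbf s)\}\subseteq\R^n\times\R
\]
is a jointly Borel set. This holds because order statistics are measurable, so the set is the region under the graph of a measurable extended-real function.

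We then express the event $\{T_\nu\in E\}$ as $\ind\{T_\nu\le\hat\lambda(\mathbf S)\}=g(\mathbf S,T_\nu)$, where $g(\mathbf s,t):=\ind\{t\le\hat\lambda(\mathbf s)\}$ is a bounded, nonnegative, jointly measurable function. Since $T_\nu$ is independent of $\mathbf S$, the joint law of $(\mathbf S,T_\nu)$ is the product $P_{\mathbf S}\otimes\nu$. Tonelli's theorem then gives
\[
\Prob(T_\nu\in E)=\int\!\!\int g(\mathbf s,t)\,d\nu(t)\,dP_{\mathbf S}(\mathbf s)=\int \nu\bigl((-\infty,\hat\lambda(\mathbf s)]\bigr)\,dP_{\mathbf S}(\mathbf s)=\E[\nu(E)].
\]
Equivalently, one can invoke the freezing lemma: $\E[g(\mathbf S,T_\nu)\mid\mathbf S]=h(\mathbf S)$ with $h(\mathbf s)=\int g(\mathbf s,t)\,d\nu(t)$.

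The only point needing care is the convention $\hat\lambda=+\infty$ when $k_\alpha=n+1$. In that case $E=\R$, so $\nu(E)=1$ and $g\equiv1$ on that event, and the identity still holds. The measurability of $\mathbf s\mapsto\nu\bigl((-\infty,\hat\lambda(\mathbf s)]\bigr)$ needed to write the right-hand expectation also follows from Tonelli.

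I do not expect any substantive obstacle; the main thing to check is the joint measurability of $g$, which is routine. This lemma is then applied with $\nu=P_S^{\xi_0}$, which gives $\E[P_S^{\xi_0}(E)]\ge1-\alpha$ by Theorem~\ref{thm:split-cp} applied to an i.i.d. exchangeable copy. It is also applied with $\nu=P_S^{\xi'}$, which gives $C=\E[P_S^{\xi'}(E)]$.
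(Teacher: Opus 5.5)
Your proposal is correct and follows the same route as the paper: condition on the calibration scores so that $E$ is fixed, use independence to get $\Prob(T_\nu\in E\mid S_1,\ldots,S_n)=\nu(E)$, and take expectations. Your Tonelli/freezing-lemma formulation makes this one-line step rigorous, including the joint measurability of $\ind\{t\le\hat\lambda(\mathbf s)\}$ and the $\hat\lambda=+\infty$ convention.
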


\begin{proof}
    Conditioning on $S_1,\ldots,S_n$ makes $E$ deterministic while
    $T_\nu$ retains law $\nu$, so
    $\Prob(T_\nu\in E\mid S_1,\ldots,S_n)=\nu(E)$ and the claim
    follows by taking expectations.
\end{proof}

\begin{corollary}[Calibration-law integral bounds]
    \label{cor:calibration-integral}
    Let $S_1,\ldots,S_n\overset{\mathrm{iid}}{\sim}P_S^{\xi_0}$ be the
    calibration scores and $T_P\sim P_S^{\xi_0}$ an independent
    reference. Then $\E[P_S^{\xi_0}(E)]\ge 1-\alpha$. If, in addition,
    $S_1,\ldots,S_n,T_P$ are almost surely distinct, then
    $\E[P_S^{\xi_0}(E)]\le 1-\alpha+\tfrac{1}{n+1}$.
\end{corollary}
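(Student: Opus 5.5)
By Lemma~\ref{lem:integral-identity} applied with $\nu=P_S^{\xi_0}$ and $T_\nu=T_P$, we have $\E[P_S^{\xi_0}(E)]=\Prob(T_P\le\hat\lambda)$. Since $S_1,\ldots,S_n,T_P$ are i.i.d.\ from $P_S^{\xi_0}$, they are exchangeable. Both claims therefore reduce to the classical split-conformal rank argument applied to the $(n+1)$-tuple $(S_1,\ldots,S_n,T_P)$. We may either invoke Theorem~\ref{thm:split-cp} directly for the lower bound, or reprove it via ranks so that the same computation also gives the upper bound.

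\textbf{Lower bound.} If $k_\alpha=n+1$, then $\hat\lambda=+\infty$ and the probability is $1$, so assume $k_\alpha\le n$. The key combinatorial step is the event inclusion
\[
\{T_P\le S_{(k_\alpha)}\}\supseteq\{\text{rank of }T_P\text{ among the }n+1\text{ values is}\le k_\alpha\},
\]
where ties are broken uniformly at random (or by index). The inclusion holds because if at most $k_\alpha-1$ of the $S_i$ lie strictly below $T_P$, then the $k_\alpha$-th smallest $S_i$ is at least $T_P$. By exchangeability, the randomized rank of $T_P$ is uniform on $\{1,\ldots,n+1\}$. Hence the probability is at least $k_\alpha/(n+1)\ge 1-\alpha$, using $k_\alpha=\lceil (n+1)(1-\alpha)\rceil$.

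\textbf{Upper bound.} Under almost-sure distinctness the ranks are a.s.\ well defined without tie-breaking, and the inclusion above becomes an equality: $T_P\le S_{(k_\alpha)}$ holds if and only if fewer than $k_\alpha$ of the $S_i$ are below $T_P$, that is, if and only if $\mathrm{rank}(T_P)\le k_\alpha$. Uniformity of the rank then gives exactly $k_\alpha/(n+1)$. Since $\lceil x\rceil<x+1$, we get $k_\alpha/(n+1)<1-\alpha+1/(n+1)$, as required. In the case $k_\alpha=n+1$ we have $\alpha<1/(n+1)$, so the value $1\le 1-\alpha+1/(n+1)$ still satisfies the bound.

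The only delicate step is the tie-handling in the lower bound. Distinctness is not assumed there, so the argument must use either randomized tie-breaking or the deterministic count statement $\#\{i\le n+1: V_i\le V_{n+1}\}$. The latter is stochastically dominated by a uniform rank, since ties only increase the count. I would use the standard count argument: $\Prob(\#\{j:V_j<V_{n+1}\}\le k-1)\ge k/(n+1)$ for exchangeable $V$. This follows by averaging the indicator over the $n+1$ positions and noting that at least $k$ positions satisfy it in any realization.
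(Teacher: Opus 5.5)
Your proposal is correct and follows the paper's route. Lemma~\ref{lem:integral-identity} with $\nu=P_S^{\xi_0}$ turns $\E[P_S^{\xi_0}(E)]$ into $\Prob(T_P\le\hat\lambda)$, and the split-conformal bounds are then applied to the exchangeable sequence $(S_1,\ldots,S_n,T_P)$; the only difference is that you reprove the rank bounds (including the $\hat\lambda=+\infty$ case) where the paper cites Theorem~\ref{thm:split-cp} and \citet{lei2018}.

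There is one minor slip in your last paragraph. Ties inflate the non-strict count $\#\{i\le n+1: V_i\le V_{n+1}\}$, so that count stochastically dominates the uniform rank rather than being dominated by it. The strict count $\#\{j: V_j<V_{n+1}\}$ that you then use is the right one, since $\{T_P\le S_{(k_\alpha)}\}$ is exactly the event that this count is at most $k_\alpha-1$.
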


\begin{proof}
    The augmented sequence $S_1,\ldots,S_n,T_P$ is i.i.d.\ and hence
    exchangeable. Theorem~\ref{thm:split-cp} applied to this sequence
    gives $\Prob(T_P\in E)\ge 1-\alpha$, and when the scores are
    almost surely distinct the standard split-conformal upper bound
    \citep{lei2018} additionally gives
    $\Prob(T_P\in E)\le 1-\alpha+\tfrac{1}{n+1}$.
    Lemma~\ref{lem:integral-identity} with $\nu=P_S^{\xi_0}$ rewrites
    $\Prob(T_P\in E)$ as $\E[P_S^{\xi_0}(E)]$.
\end{proof}

\subsection{Score-space coverage bounds}
\label{app:score-bounds}

The next lemma is the shared engine: it converts any almost-sure bound
on the disagreement of the two score laws over the acceptance region
into a two-sided bound on the test coverage. Each result in
Section~\ref{sec:bounds} is obtained by supplying such a bound.

\begin{lemma}[Coverage gap from a half-line discrepancy]
    \label{lem:coverage-gap}
    Let $S_1,\ldots,S_n\overset{\mathrm{iid}}{\sim}P_S^{\xi_0}$ be the
    calibration scores and $S_{n+1}\sim P_S^{\xi'}$ independent of
    them. If a constant $\Delta$ satisfies
    $|P_S^{\xi_0}(E)-P_S^{\xi'}(E)|\le\Delta$ almost surely, then
    \begin{equation}\label{eq:gap-lower}
        C\;\ge\;1-\alpha-\Delta.
    \end{equation}
    If, in addition, $S_1,\ldots,S_n$ and an independent reference
    $T_P\sim P_S^{\xi_0}$ are almost surely distinct, then
    \begin{equation}\label{eq:gap-upper}
        C\;\le\;1-\alpha+\tfrac{1}{n+1}+\Delta.
    \end{equation}
\end{lemma}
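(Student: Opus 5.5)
The plan is to express the test coverage as an expectation over the calibration sample and compare it to the calibration-side expectation controlled by Corollary~\ref{cor:calibration-integral}. First, since $S_{n+1}\sim P_S^{\xi'}$ is independent of $S_1,\ldots,S_n$, Lemma~\ref{lem:integral-identity} with $\nu=P_S^{\xi'}$ gives
\[
C=\Prob(S_{n+1}\in E)=\E\bigl[P_S^{\xi'}(E)\bigr].
\]
Here $E=(-\infty,\hat\lambda]$ is a measurable function of the calibration scores. When $\hat\lambda=+\infty$ it is all of $\R$, and both laws assign it mass one. So $P_S^{\xi_0}(E)$ and $P_S^{\xi'}(E)$ are bounded random variables, and all expectations below are well defined.

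Next I split the coverage as
\[
C=\E\bigl[P_S^{\xi_0}(E)\bigr]+\E\bigl[P_S^{\xi'}(E)-P_S^{\xi_0}(E)\bigr].
\]
The almost-sure hypothesis $|P_S^{\xi_0}(E)-P_S^{\xi'}(E)|\le\Delta$ implies, by monotonicity of expectation, that the second term lies in $[-\Delta,\Delta]$. For the lower bound \eqref{eq:gap-lower}, I combine this with $\E[P_S^{\xi_0}(E)]\ge 1-\alpha$ from Corollary~\ref{cor:calibration-integral}, which needs only the i.i.d.\ calibration scores. This gives $C\ge 1-\alpha-\Delta$.

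For the upper bound \eqref{eq:gap-upper}, the extra almost-sure distinctness assumption lets the second half of Corollary~\ref{cor:calibration-integral} apply, so $\E[P_S^{\xi_0}(E)]\le 1-\alpha+\tfrac{1}{n+1}$. Adding the bound $+\Delta$ on the discrepancy term gives the claim.

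There is no real obstacle, since the argument is a triangle-type decomposition. The only point needing care is that the first step uses only independence of $S_{n+1}$ from the calibration sample, not exchangeability. That way the shift enters solely through the discrepancy term and the calibration guarantee is invoked only for the reference law $P_S^{\xi_0}$. I would also remark that $\Delta$ must be deterministic, or at least that the bound holds with $\E[\,|P_S^{\xi_0}(E)-P_S^{\xi'}(E)|\,]$ in its place. Later results supply $\Delta=\dtv$ or $\dks$, which bound the discrepancy uniformly over all realizations of $E$.
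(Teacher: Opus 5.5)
Your proposal is correct and follows essentially the same route as the paper. Both arguments identify $C=\E[P_S^{\xi'}(E)]$ via Lemma~\ref{lem:integral-identity}, use the almost-sure bound to place $C$ within $\pm\Delta$ of $\E[P_S^{\xi_0}(E)]$, and then apply the two halves of Corollary~\ref{cor:calibration-integral}, the upper half only under the no-ties condition.
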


\begin{proof}
    Since $S_{n+1}\sim P_S^{\xi'}$ is independent of the calibration
    scores, Lemma~\ref{lem:integral-identity} with $\nu=P_S^{\xi'}$
    identifies the test coverage \eqref{eq:deployment-coverage} as
    $C=\Prob(S_{n+1}\in E)=\E[P_S^{\xi'}(E)]$. The almost-sure
    discrepancy bound gives
    \[
        \E[P_S^{\xi_0}(E)]-\Delta
        \;\le\; C \;\le\;
        \E[P_S^{\xi_0}(E)]+\Delta,
    \]
    and Corollary~\ref{cor:calibration-integral} bounds
    $\E[P_S^{\xi_0}(E)]$ between $1-\alpha$ and
    $1-\alpha+\tfrac{1}{n+1}$, the upper side under the no-ties
    condition. Combining the two displays yields \eqref{eq:gap-lower}
    and \eqref{eq:gap-upper}.
\end{proof}

\begin{proof}[Proof of Proposition~\ref{prop:score-tv}]
    By the definition of total variation, every measurable set, in
    particular the realized acceptance region $E$, satisfies
    $|P_S^{\xi_0}(E)-P_S^{\xi'}(E)|\le\dtv(P_S^{\xi_0},P_S^{\xi'})$
    almost surely. Applying the lower bound \eqref{eq:gap-lower} of
    Lemma~\ref{lem:coverage-gap} with
    $\Delta=\dtv(P_S^{\xi_0},P_S^{\xi'})$ gives
    \eqref{eq:score-tv-lower}.
\end{proof}

\begin{proof}[Proof of Theorem~\ref{thm:score-ks}]
	For every realization of $S_1,\ldots,S_n$, if
	$\hat\lambda=+\infty$ then $E=\R$ and
	$|P_S^{\xi_0}(E)-P_S^{\xi'}(E)|=0$. Otherwise
	$\hat\lambda\in\R$, so $E=(-\infty,\hat\lambda]$ and evaluating the
	KS supremum~\eqref{eq:ks-def} at $t=\hat\lambda$ gives
    \begin{equation}\label{eq:score-pathwise}
    \begin{aligned}
        |P_S^{\xi_0}(E)-P_S^{\xi'}(E)|
        &=|F_S^{\xi_0}(\hat\lambda)-F_S^{\xi'}(\hat\lambda)| \\
        &\le\dks(P_S^{\xi_0},P_S^{\xi'}).
    \end{aligned}
    \end{equation}
    Applying the lower bound \eqref{eq:gap-lower} of
    Lemma~\ref{lem:coverage-gap} with
    $\Delta=\dks(P_S^{\xi_0},P_S^{\xi'})$ gives
    \eqref{eq:score-ks-lower}. Since $\dks\le\dtv$, this bound is at
    least as tight as Proposition~\ref{prop:score-tv}; this half-line
    restriction is the step specific to the conformal setting.
\end{proof}

\subsection{Label-free coverage bounds}
\label{app:proof-label-free}

\begin{proof}[Proof of Theorem~\ref{thm:label-free}]
    \emph{TV statement.} For a Borel set $A\subseteq\R$, let
    $f_A(u):=K(A\mid u)\in[0,1]$.
    Assumption~\ref{assump:kernel-stability} gives
    $P_S^{\xi_0}(A)-P_S^{\xi'}(A)=\int f_A\,d(P_U^{\xi_0}-P_U^{\xi'})$.
    Since $f_A$ takes values in $[0,1]$, the dual characterization of
    total variation,
    $\dtv(P,Q)=\sup_{0\le f\le1}\bigl|\int f\,d(P-Q)\bigr|$, yields
    $|P_S^{\xi_0}(A)-P_S^{\xi'}(A)|\le\dtv(P_U^{\xi_0},P_U^{\xi'})$, and
    taking the supremum over $A$ gives the data-processing inequality
    \begin{equation}\label{eq:dp-tv}
        \dtv(P_S^{\xi_0},P_S^{\xi'})
        \le\dtv(P_U^{\xi_0},P_U^{\xi'}).
    \end{equation}
    Substituting \eqref{eq:dp-tv} into Proposition~\ref{prop:score-tv}
    yields \eqref{eq:label-free-tv}.

    \emph{Wasserstein statement.} For probability measures $P,Q$ on
    $(\U,\rho)$ with finite first moments, let $\Pi(P,Q)$ be the set
    of couplings of $P$ and $Q$, and define
    \[
        \Wone^\rho(P,Q)
        :=\inf_{\gamma\in\Pi(P,Q)}
        \mathbb{E}_{(U,V)\sim\gamma}[\rho(U,V)].
    \]
    Kantorovich--Rubinstein duality on Polish metric spaces
    \citep[][Theorem~5.10, particular case ~5.16]{villani2009optimal} gives
    \[
        \Wone^\rho(P,Q)
        =
        \sup_{\|h\|_{\mathrm{Lip}(\rho)}\le 1}
        \Big|\int h\,dP-\int h\,dQ\Big|.
    \]
    For each $t\in\R$, let $g_t(u):=K((-\infty,t]\mid u)$. By
    Assumption~\ref{assump:kernel-stability},
    $F_S^\xi(t)=\int g_t\,dP_U^\xi$ for
    $\xi\in\{\xi_0,\xi'\}$. Since $g_t$ is $L$-Lipschitz with
    respect to $\rho$ uniformly in $t$, the dual representation gives
    \[
    \begin{aligned}
        |F_S^{\xi_0}(t)-F_S^{\xi'}(t)|
        &=\Big|\!\int g_t\,d(P_U^{\xi_0}-P_U^{\xi'})\Big| \\
        &\le L\,\Wone^{\rho}(P_U^{\xi_0},P_U^{\xi'}).
    \end{aligned}
    \]
    Taking the supremum over $t$ yields
    \begin{equation}\label{eq:dp-w1}
        \dks(P_S^{\xi_0},P_S^{\xi'})
        \le L\,\Wone^{\rho}(P_U^{\xi_0},P_U^{\xi'}).
    \end{equation}
    Substituting \eqref{eq:dp-w1} into Theorem~\ref{thm:score-ks}
    gives \eqref{eq:label-free-w1}.
\end{proof}

\subsection{Matching upper bounds}
\label{app:upper-bounds}

The lower-bound proofs above each established an almost-sure bound on
the half-line discrepancy $|P_S^{\xi_0}(E)-P_S^{\xi'}(E)|$. Feeding the
same bound into the upper half of Lemma~\ref{lem:coverage-gap} yields
the matching upper bounds with no extra work.

\begin{remark}[Matching upper bounds]
    \label{rem:upper-bounds}
    Assume the hypotheses of the corresponding lower bound and, in
    addition, that the calibration scores and an independent reference
    $T_P\sim P_S^{\xi_0}$ are almost surely distinct. Then each
    discrepancy
    \[
    \begin{aligned}
        \Delta\in\bigl\{&
        \dtv(P_S^{\xi_0},P_S^{\xi'}),\\
        &\dks(P_S^{\xi_0},P_S^{\xi'}),\\
        &\dtv(P_U^{\xi_0},P_U^{\xi'}),\\
        &L\,\Wone^{\rho}(P_U^{\xi_0},P_U^{\xi'})
        \bigr\}
    \end{aligned}
    \]
    also satisfies
    \begin{equation}\label{eq:matching-upper}
        C\;\le\;1-\alpha+\tfrac{1}{n+1}+\Delta.
    \end{equation}
\end{remark}

\begin{proof}
    For the two score-space discrepancies the half-line bound
    $|P_S^{\xi_0}(E)-P_S^{\xi'}(E)|\le\Delta$ holds almost surely by
    the definition of total variation and by \eqref{eq:score-pathwise},
    respectively. For the two label-free discrepancies the
    data-processing inequalities \eqref{eq:dp-tv} and \eqref{eq:dp-w1}
    give
    $|P_S^{\xi_0}(E)-P_S^{\xi'}(E)|\le\dtv(P_S^{\xi_0},P_S^{\xi'})\le
    \dtv(P_U^{\xi_0},P_U^{\xi'})$ and
    $|P_S^{\xi_0}(E)-P_S^{\xi'}(E)|\le\dks(P_S^{\xi_0},P_S^{\xi'})\le
    L\,\Wone^{\rho}(P_U^{\xi_0},P_U^{\xi'})$.
    In every case the upper bound \eqref{eq:gap-upper} of
    Lemma~\ref{lem:coverage-gap} gives \eqref{eq:matching-upper}.
\end{proof}

\section{Finite-Sample Estimation Details for Bound Diagnostics}
\label{app:bound-verification}
This appendix describes the finite-sample plug-in diagnostics used in
Table~\ref{tab:bounds-empirical}. These estimates are empirical proxies for
the population discrepancies in Section~\ref{sec:bounds}.

\paragraph{Predictor state for the label-free bounds.}
The label-free bounds of Theorem~\ref{thm:label-free} compare the
predictor-state laws $P_U^{\xi_0}$ and $P_U^{\xi'}$ on a common space
$(\U,\rho)$, so the predictor state $U$ must take the same form for every
question. 
For MCQA this is automatic because the candidate set is fixed:
for white-box scores we use
$U=\hat p_\xi(\cdot\mid X)$, and for sampling-based black-box scores
we use $U=F^\xi$; both are four-dimensional probability vectors over
$\Y$.
For OEQA the
candidate set is the per-question sample pool, so $F^\xi$ has a
question-dependent length and its coordinates index answer strings that
change from one question to the next; a discrepancy
$\dtv(P_U^{\xi_0},P_U^{\xi'})$ or $\Wone^{\rho}(P_U^{\xi_0},P_U^{\xi'})$
between two such vectors is not even well defined. 
We therefore summarise each question by its \emph{sorted top-$K'$ frequency
	profile}:
\[
U =
\bigl(F_{(1)}^\xi,\ldots,F_{(K')}^\xi\bigr),
\qquad
F_{(1)}^\xi \ge F_{(2)}^\xi \ge \cdots,
\]
where missing coordinates are zero-padded. We set $K'=4$. 
Furthermore, these quantities for OEQA are computed on the same cell-specific
answerable-intersection population used for coverage evaluation
(Appendix~\ref{app:evaluation-diagnostics}).

\paragraph{Estimating the discrepancies.}
Each discrepancy is a plug-in estimate formed from the calibration and test
draws of a shift cell.
\begin{itemize}[leftmargin=*,nosep]
  \item \emph{Score-space TV} ($\dtv(P_S^{\xi_0},P_S^{\xi'})$). The
  calibration and test scores are histogrammed on a common grid spanning
  their pooled range, and the estimate is the half-$\ell_1$ distance
  $\tfrac12\sum_b|p_b-q_b|$ between the two normalised histograms; the grid
  uses $B$ bins, $B=50$ in our case.
  \item \emph{Score-space KS} ($\dks(P_S^{\xi_0},P_S^{\xi'})$). The two-sample
  Kolmogorov--Smirnov statistic
  $\sup_t|\widehat F_S^{\xi_0}(t)-\widehat F_S^{\xi'}(t)|$ between the
  calibration and test score samples, which has no free parameter and is the
  quantity Theorem~\ref{thm:score-ks} bounds.
  \item \emph{Predictor-state TV} ($\dtv(P_U^{\xi_0},P_U^{\xi'})$). Each
  predictor state is quantised to a regular grid of spacing $1/M$ on the
  simplex (each coordinate rounded to a multiple of $1/M$, with the residual
  assigned to the coordinates of largest fractional part), and the estimate
  is the half-$\ell_1$ distance between the resulting empirical distributions
  over grid cells, with $M=4$.
  \item \emph{Predictor-state $W_1$}
  ($\Wone^{\rho}(P_U^{\xi_0},P_U^{\xi'})$). We use a sliced-$W_1$
  proxy with $J=50$ random unit directions, project both predictor-state
  samples onto each direction, average the resulting one-dimensional
  Wasserstein-$1$ distances, and report
  $\widehat\Delta=\widehat{SW}_1$, corresponding to the heuristic scaling
  $L=1$. 
\end{itemize}

\section{Mitigation Method Details and Ablations}
\label{app:mitigation-detail}

This appendix specifies the six mitigations used in
Section~\ref{sec:mitigations}. 
Throughout this appendix, $U^\xi(X)$ denotes the predictor state used by the
corresponding score family: for white-box MCQA scores this is the predictive
probability vector, while for sampling-based black-box scores it is the
sample-frequency vector.

\subsection{Reweight}
\label{app:mit-reweight}

Reweight keeps the original calibration scores
$S_i=V((X_i,Y_i),\xi_0)$ but changes their mass in the conformal
quantile. We first form predictor states
$u_i=U^{\xi_0}(X_i)$ for calibration examples and
$u'_j=U^{\xi'}(X'_j)$ for the unlabeled deployment
batch. A logistic discriminator
$\hat\pi:\U\to[0,1]$ is trained to distinguish the calibration states
(class $0$) from the deployment states (class $1$). Each calibration
score receives importance weight
$w_i=\hat\pi(u_i)/(1-\hat\pi(u_i))$, clipped to $[0.1,10]$, and
$\hat\lambda$ is the weighted empirical $(1-\alpha)$-quantile of
$\{S_i\}_{i=1}^n$ under these weights.

This is a predictor-state reweighting baseline inspired by weighted split
CP \citep{tibshirani2019covariate},
where a density ratio $dQ/dP$ replaces the uniform calibration
weights. The difference is the object on which the ratio is estimated:
we use $U^\xi(X)$ rather than raw $X$, matching the
label-free predictor-state reduction in
Assumption~\ref{assump:kernel-stability}. The method therefore tests
whether the standard covariate-shift correction can help even though
configuration shift changes the score map $V(\cdot,\xi)$.

\subsection{Mos-U}
\label{app:mit-mosu}

Mos-U builds a mosaic calibration sample by evaluating calibration
examples under alternative configurations. 
For each shift axis, \(\Xi_{\mathrm{aug}}\) denotes the finite configuration
grid used in the mitigation experiment.
For each calibration index
$i$, draw $\xi_i$ uniformly from
$\Xi_{\mathrm{aug}}\setminus\{\xi'\}$ and replace the anchor score by
$V(Z_i,\xi_i)$. The deployment configuration remains fixed at $\xi'$;
only the calibration scores used to compute $\hat\lambda$ are mixed.
The threshold is the ordinary empirical $(1-\alpha)$-quantile of the
mosaic scores $\{V(Z_i,\xi_i)\}_{i=1}^n$.

The closest analogy is prompt ensembling, including uniform prompt
ensembles and Bayesian prompt ensembles
\citep{jiang2023cape,tonolini2024bayespe}. Those methods combine
prompt-conditioned predictions or probabilities to improve predictive
stability. Mos-U transfers the same pooling idea to conformal
calibration: the pooled object is the calibration score distribution used to set the
conformal threshold, not the deployment prediction itself.

\subsection{\texorpdfstring{$\alpha$}{alpha}-Inf}
\label{app:mit-ainf}

\mbox{$\alpha$-Inf} turns the Wasserstein term in the label-free bound into a conservative
plug-in correction. It estimates
\[
    \widehat\varepsilon
    =
    L\cdot\widehat\Wone\!\bigl(\hat P_U^{\xi_0},\hat P_U^{\xi'}\bigr)
\]
by a sliced-\(W_1\) proxy using \(50\) random one-dimensional projections of
the predictor-state vectors, on the
empirical predictor distributions
$\hat P_U^{\xi_0}=\tfrac{1}{n}\sum_{i=1}^n\delta_{U^{\xi_0}(X_i)}$
and
$\hat P_U^{\xi'}=\tfrac{1}{m}\sum_{j=1}^m\delta_{U^{\xi'}(X'_j)}$,
with $L=1$. As in Appendix~\ref{app:bound-verification}, this is a heuristic sliced-\(W_1\)
proxy rather than a finite-sample certificate. 
The miscoverage budget is then deflated to
$\alpha'=\max(\alpha-\widehat\varepsilon,0)$, and the conformal
threshold is recomputed as the $(1-\alpha')$-quantile of the original
calibration scores.

This follows the robust-CP idea of paying for distributional mismatch
by enlarging the threshold. \citet{cauchois2024robust} use
worst-case quantiles over an $f$-divergence ball, while
\citet{xu2025wrcp} use Wasserstein bounds and training-time
regularization. Here the correction is label-free and inference-only:
the discrepancy in Theorem~\ref{thm:label-free} is computed on
predictor states, then charged directly against $\alpha$.

\subsection{Recal}
\label{app:mit-recal}

Recal uses the same $\widehat\varepsilon$ as \mbox{$\alpha$-Inf} only
as a gate. For each
$(\text{model},\text{dataset},\text{score family})$, we estimate a
noise floor $\beta$ as the $95$th percentile of
$\widehat\varepsilon$ on i.i.d.\ cells with $\xi'=\xi_0$. If
$\widehat\varepsilon\le\beta$, Recal keeps the original conformal
threshold. If $\widehat\varepsilon>\beta$, it spends a labeled
deployment budget $k$ and recomputes
$\hat\lambda$ as the $(1-\alpha)$-quantile of
$\{V((X_l,Y_l),\xi')\}_{l=1}^k$. 

Recal is related to adaptive conformal inference
\citep{gibbs2021adaptive} in that both use deployment-side evidence to
modify conformal thresholds. The deployment evidence differs: adaptive
conformal inference updates online from realized coverage errors,
whereas Recal first detects a static configuration change from
unlabeled predictor states and only then asks for a small labeled
deployment sample. This design matches LLM configuration changes such
as a new prompt template, temperature, or quantization level, which are
usually discrete changes rather than gradual time-series drift.

We set $k=50$ by an expected-tail-count rule: at our main target
$1-\alpha=0.9$, the upper tail has mass $\alpha=0.1$, so requiring about
five labeled tail examples gives $k\alpha\approx5$. This keeps the
recalibrated threshold from being driven by one or two extremes while
remaining a small labeled deployment budget. The prompt-axis ablation in
Figure~\ref{fig:recal-budget} supports this choice: improvements
mostly saturate after $\sim\!20$ labels, undercoverage falls from
$U_\tau=0.32$ to $\approx\!0.06$ and stays flat across $k\in[20,60]$,
whereas $k=10$ remains noisy ($U_\tau=0.22$).

\begin{figure}[t]
	\centering
	\includegraphics[width=\linewidth]{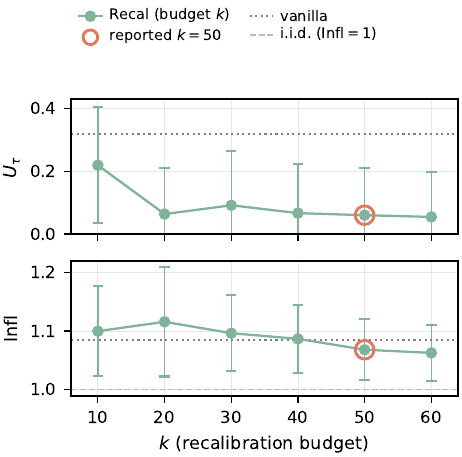}
	\caption{\textbf{Effect of the Recal labeled budget $k$ (prompt axis).}
		Median $U_\tau$ (top) and $\mathrm{Infl}$ (bottom) at
		$1-\alpha=0.9$, pooled over
		$(\text{model},\text{dataset},\text{score})$. Error bars are
		cross-configuration standard deviations; dotted/dashed lines are
		vanilla CP and i.i.d.\ $\mathrm{Infl}=1$; the open ring marks
		$k=50$.}
	\label{fig:recal-budget}
\end{figure}

\begin{figure}[t]
	\centering
	\includegraphics[width=\linewidth]{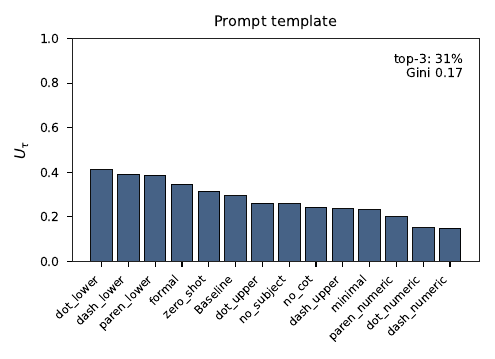}
	\caption{\textbf{The fragility profile, illustrated on the prompt
			axis.} Each bar is the deployment-side undercoverage rate
		$U_\tau$ of vanilla CP for one MCQA prompt template, pooled over
		$(\text{model},\text{score},\text{trial})$ and sorted from most to
		least fragile.}
	\label{fig:fragility-profile}
\end{figure}

\subsection{Mos-F}
\label{app:mit-mosf}
Mos-F is Mos-U with fragility-weighted sampling. For each shift axis,
we estimate an offline fragility profile
$\{f_\xi\}_\xi$, where
$f_\xi=U_\tau$ is the empirical undercoverage rate of
vanilla CP when $\xi$ is used as the deployment configuration, pooled over the same configuration grid as Section~\ref{sec:empirical}
but estimated on auxiliary audit splits disjoint from the final
mitigation evaluation. Calibration
configuration $\xi_i$ is then drawn from
$\Xi_{\mathrm{aug}}\setminus\{\xi'\}$ with probability proportional to
$f_\xi$, and the calibration score is $V(Z_i,\xi_i)$. Thus the mosaic
over-samples configurations that most often break vanilla coverage.
Figure~\ref{fig:fragility-profile} illustrates the profile on the
prompt axis.

Mos-F is closest to weighted prompt ensembling, especially BayesPE
\citep{tonolini2024bayespe}, but its weights serve a different
objective. BayesPE weights prompts by evidence for prediction quality;
Mos-F weights configurations by the empirical undercoverage rate.

\subsection{Anc}
\label{app:mit-anck3}

Anc keeps the mosaic construction but uses a small, fixed calibration
pool. The anchor or reference configuration is the original calibration
configuration \(\xi_0\). Let $\mathcal T_K$ be the $K$ non-anchor configurations with the
largest fragility weights $f_\xi$ on the relevant shift axis. Anc draws
each calibration configuration uniformly from
$\{\xi_0\}\cup\mathcal T_K$ and uses the corresponding score
$V(Z_i,\xi_i)$. We report $K=3$, so the calibration pool always
contains the anchor configuration plus the three most fragile
non-anchor configurations.

Figure~\ref{fig:anc-k} ablates $K$ on the prompt axis. Undercoverage is lowest at $K=3$, falling from $U_\tau=0.32$
under vanilla CP to $0.07$, and rises again at $K=5$ ($0.17$) and $K=7$
($0.18$): enlarging the pool past the few genuinely fragile templates
dilutes it with easier configurations, which lowers the calibration
quantile and erodes the recovered coverage. Set-size inflation is
essentially flat in $K$ ($\mathrm{Infl}\in[1.04,1.07]$, between the
i.i.d.\ reference $1$ and vanilla's $1.08$), so the coverage gained at
$K=3$ does not increase set size relative to the vanilla shifted baseline in this
prompt-axis ablation. We therefore use \(K=3\) as a simple fixed operating point in the main
mitigation comparison.

\begin{figure}[t]
	\centering
	\includegraphics[width=\linewidth]{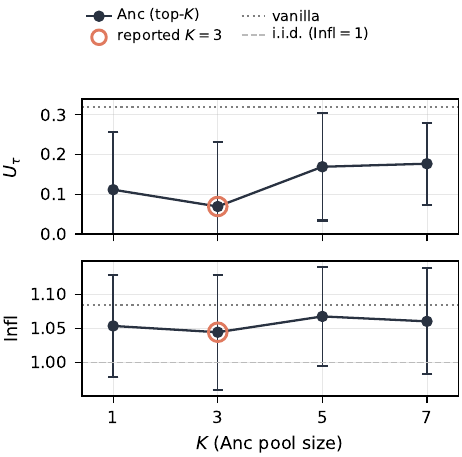}
	\caption{\textbf{Effect of the Anc pool size $K$ (prompt axis).}
		Median $U_\tau$ (top) and $\mathrm{Infl}$ (bottom) at
		$1-\alpha=0.9$, pooled over
		$(\text{model},\text{dataset},\text{score})$. Error bars are
		cross-configuration standard deviations; dotted/dashed lines are
		vanilla CP and i.i.d.\ $\mathrm{Infl}=1$.}
	\label{fig:anc-k}
\end{figure}

\section{Extended Related Work}
\label{app:related-extended}

This appendix expands the related work of Section~\ref{sec:related}. We give
a fuller account of the literatures our paper draws on and connects to, and
state where our contribution sits in each.

\paragraph{Uncertainty quantification for LLMs.}
Uncertainty quantification (UQ) for LLMs has produced a range of confidence
signals that differ mainly in the model access they require
\citep{shorinwa2024uqsurvey}. \emph{Logit-based} methods read uncertainty
off the model's output distribution, spanning sequence likelihood and
predictive entropy, semantic entropy that aggregates probability mass over
meaning-equivalent generations, and the model's own probability that an
answer is correct \citep{kuhn2023semantic,kadavath2022knowwhattheyknow}.
When logits are inaccessible, two
\emph{logit-free} families remain: \emph{verbalized} methods prompt the
model to state its confidence directly
\citep{lin2022teaching,tian2023justask,xiong2024canllms}; and
\emph{frequency-based} methods sample multiple generations and quantify
uncertainty from their agreement or semantic dispersion
\citep{wang2023selfconsistency,manakul2023selfcheckgpt,lin2024generating,robustllm}.
These signals are informative but heuristic---none carries a finite-sample
guarantee that the reported confidence matches the true error rate.
CP is the distribution-free branch of this landscape: it
turns any such signal, logit-based or logit-free, into a nonconformity
score with a coverage guarantee \citep{campos2024conformal}, and the
white-box and black-box scores we study (Section~\ref{sec:cp-llm}) are
exactly conformal wrappers of the logit-based and frequency-based signals
above. That guarantee, however, is implicitly conditional on a fixed
score-generating pipeline. Our work makes this conditioning explicit,
shows it is routinely violated in deployment, and quantifies the resulting
coverage loss.

\paragraph{Applications of CP.}
CP has been applied across a broad range of machine-learning
tasks, including computer vision \citep{angelopoulos2021imagenet}, natural language processing \citep{quach2024clm}, and graph reasoning
\citep{zhu-etal-2025-conformalized,zhu-etal-2025-predicate,zhu-etal-2025-certainty,zhu-2025-thesis,zhu2025uncertainty}.
More recently, CP has been instantiated across the LLM task spectrum. For closed-form
answers it has been applied to multiple-choice and open-ended QA
\citep{kumar2023mcq,su2024api,wang2024conu,wang2025sconu}; for free-form
text, to open-ended and autoregressive generation
\citep{quach2024clm,ravfogel2023nucleus,deutschmann2024beam,ulmer2024noneq,farinhas2024noncrc};
and to higher-level objectives such as factuality, reasoning, and
retrieval-augmented generation
\citep{mohri2024factuality,cherian2024enhanced,rubintoles2025coherent,jiang2025linguistic,li2024traq,feng2025conformalrag},
as well as decision-making and agentic control
\citep{ren2023knowno,si2026ccpo,vishwakarma2025prune}. 
Across this body of work the
inference pipeline (e.g., prompt template, decoding parameters, and model
deployment) is fixed as part of the experimental protocol, and where
alternatives are examined at all they appear as auxiliary sensitivity
checks rather than as a modeled source of shift. Our paper isolates that
pipeline as a first-class variable and shows that varying it erodes the
very guarantees these methods rely on.

\paragraph{CP under distribution shift.}
Extensions of CP beyond exchangeability mostly target shifts in the data
law $P_Z$. One line reweights calibration scores by a known or estimated
covariate-shift likelihood ratio \citep{tibshirani2019covariate}; a second
handles general non-exchangeable or online sequences through fixed or
adaptively updated weights \citep{gibbs2021adaptive,barber2023}; a third
constructs prediction sets robust to a whole neighbourhood of test
distributions
\citep{cauchois2024robust,xu2025wrcp,yang2026multidist,aolaritei2025lp},
and conformalized LLMs have themselves been studied under covariate shift
\citep{hu2026cofact}. All of these keep the score map fixed and absorb a
moving $P_Z$. Configuration shift is the orthogonal case: $P_Z$ is fixed
and the score map $V(\cdot,\xi)$ moves.

\paragraph{Shift detection and two-sample testing.}
Our label-free diagnostics also connect to two-sample methods for detecting
distribution shift from unlabeled deployment data. Kernel two-sample tests
such as MMD \citep{gretton2012mmd} compare calibration and deployment samples
through a distributional discrepancy. In the same spirit, our label-free
bounds compare the predictor-state laws $P_U^{\xi_0}$ and $P_U^{\xi'}$,
using TV or $W_1$ not as an end in itself, but as a quantity that controls
coverage loss under configuration shift. This choice aligns with empirical
evidence that shift detection in machine-learning systems is often more
effective on low-dimensional model outputs than on raw inputs
\citep{rabanser2019failingloudly}. The closest conceptual connection is
\citet{podkopaev2021labelshift}, who obtain distribution-free uncertainty
control under label shift using unlabeled target data and predictor outputs.
Our setting is different: the data law is fixed, while the LLM configuration
and hence the score map changes. Thus our contribution is not a new generic
two-sample test, but a coverage-oriented use of predictor-state discrepancies
for diagnosing configuration shift.

\paragraph{Pipeline sensitivity and ensembling.}
That LLM behaviour is sensitive to prompt formatting, decoding choices, and
quantization is by now well documented
\citep{sclar2024quantifying,shi-etal-2024-thorough,helcig2026statistically},
and a natural response is to ensemble across pipeline variants to stabilise
predictions or probabilities \citep{jiang2023cape,tonolini2024bayespe}.
This prior work targets the stability of point predictions or the
calibration of probabilities. We make two distinct points. First,
prediction stability is not coverage stability: CP validity depends on the
entire nonconformity-score distribution that sets the calibration
threshold, not on whether the top prediction is unchanged. Second, we
repurpose the ensembling idea for the conformal objective---our
\emph{Mos-U}, \emph{Mos-F}, and \emph{Anc} mitigations ensemble over the
calibration configuration and are designed and evaluated directly in terms
of CP validity rather than predictive accuracy.

\begin{figure*}[t]
	\centering
	\includegraphics[width=\textwidth]{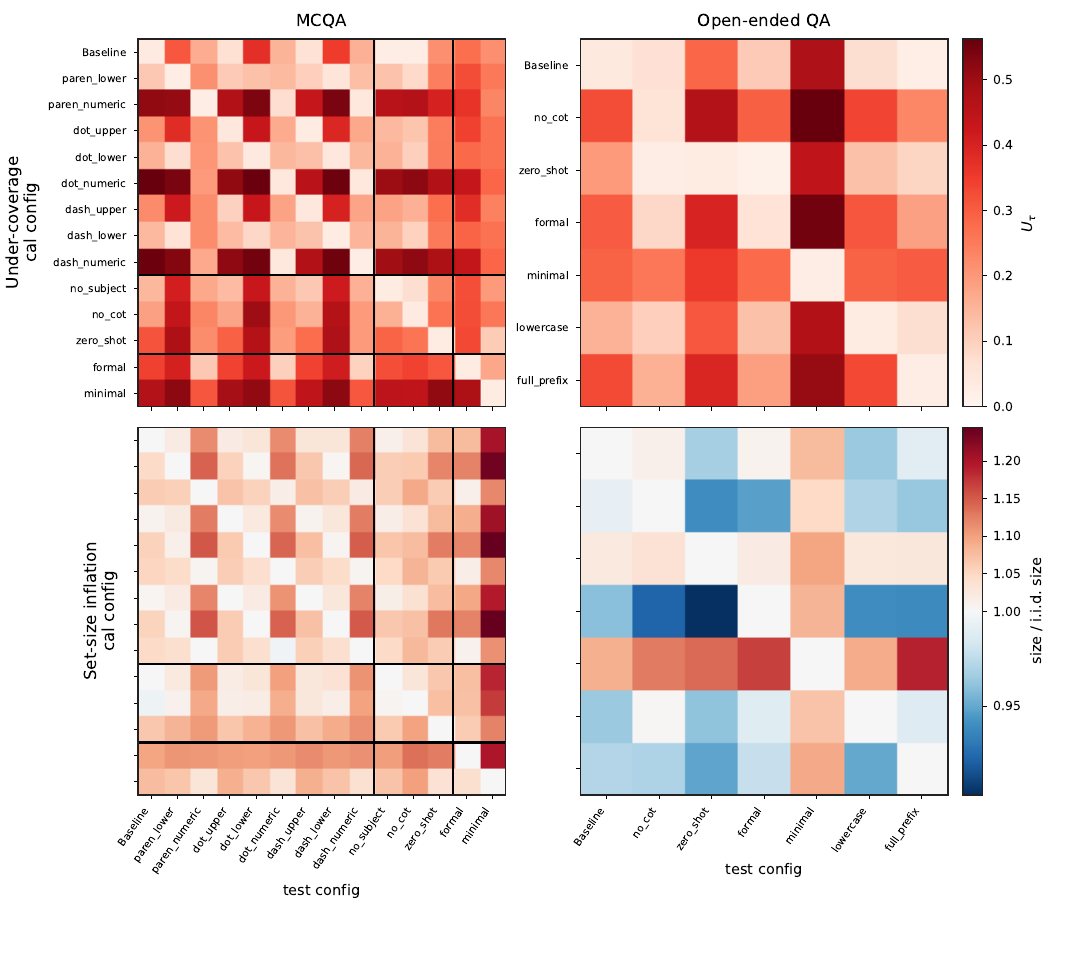}
	\caption{\textbf{Cross-template coverage damage at prompt
			granularity.} Each cell is one (calibration template, test
		template) pair, summarised over (model, score, trial); rows index
		the calibration template, columns the test template.
		\emph{Top row:} under-coverage rate $U_\tau$ (darker red $=$ more
		violation). \emph{Bottom row:} set-size inflation conditioned on
		$c\ge0.9$ (red $=$ inflated, blue $=$ shrunk, white $=$ i.i.d.).
		The diagonal (cal $=$ test) is the i.i.d.\ baseline. \emph{Left:}
		MCQA, $14$ templates ordered format\,/\,content\,/\,wrapper with
		black cluster boundaries; \emph{right:} open-ended QA, $7$
		templates. Off-diagonal cells are systematically darker; in the
		inflation panel MCQA sets grow modestly off-diagonal while
		open-ended QA stays close to its per-configuration i.i.d.\
		baseline (near-white, mild mixed growth and shrinkage).}
	\label{fig:heat-prompt}
\end{figure*}

\begin{figure*}[t]
	\centering
	\includegraphics[width=\textwidth]{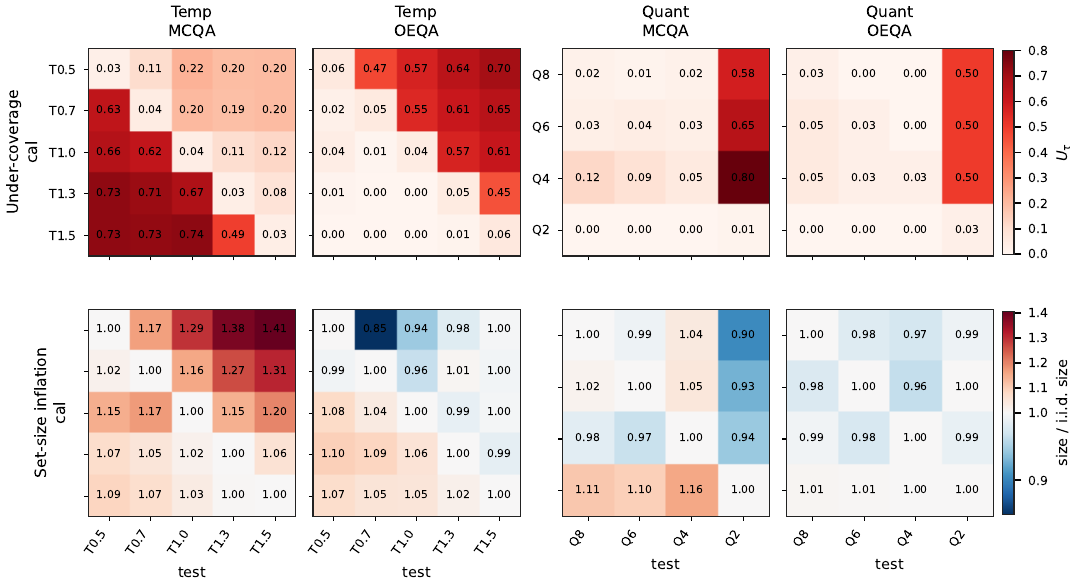}
	\caption{\textbf{Cross-configuration coverage damage on the ordered
			shift axes.} As Figure~\ref{fig:heat-prompt} but for decoding
		temperature ($5$ levels; $T{=}1.0$ is the reference) and weight
		quantization ($4$ GGUF precisions, $3$ instruction models), with cell values
		printed. Top row: under-coverage rate $U_\tau$ (Reds, shared scale
		across both axes); bottom row: set-size inflation (RdBu\_r,
		diverging at $1.0$). Under-coverage is near zero on the i.i.d.\
		diagonal and rises off it asymmetrically---worst when the deployment
		configuration carries larger nonconformity scores than calibration
		(most aggressively under $\texttt{Q2}$), not simply when the two
		sit far apart.}
	\label{fig:heat-ordered}
\end{figure*}

\section{AI Assistants In Writing}
We use AI to enhance our writing skills, abstaining from its use in research and coding endeavors.

\end{document}